\definecolor{myblue}{HTML}{1F77B4}
\definecolor{myred}{HTML}{D62728}
\definecolor{mypurple}{rgb}{0.52, 0.29222222, 0.39888889}
\newcommand{\hlmyblue}[1]{{\colorbox{myblue!20}{#1}}}
\newcommand{\hlmyred}[1]{{\colorbox{myred!20}{#1}}}
\newcommand{\hlmypurple}[1]{{\colorbox{mypurple!36}{#1}}}
\newcommand{\smtabbrev}{SMT}
\newcommand{\rsmtabbrev}{SST}
\newcommand\thefontsize{The current font size is: \f@size pt}
\title{Gradient Estimation with Stochastic Softmax Tricks}
\author{
    Max B. Paulus\thanks{Equal Contribution. Correspondence to \texttt{max.paulus@inf.ethz.ch}, \texttt{choidami@cs.toronto.edu}.}\\
    ETH Z\"{u}rich\\
    \texttt{max.paulus@inf.ethz.ch}\\
    \And
    Dami Choi\footnotemark[1]\\
    University of Toronto\\
    \texttt{choidami@cs.toronto.edu}\\
    \AND
    Daniel Tarlow\\
    Google Research, Brain Team\\
    \texttt{dtarlow@google.com}\\
    \And
    Andreas Krause\\
    ETH Z\"{u}rich\\
    \texttt{krausea@ethz.ch}\\
    \And
    Chris J. Maddison\thanks{Work done partly at the Institute for Advanced Study, Princeton, NJ.}\\
    University of Toronto \& DeepMind\\
    \texttt{cmaddis@cs.toronto.edu}
}
\begin{document}

\maketitle

\begin{abstract}
The Gumbel-Max trick is the basis of many relaxed gradient estimators. These estimators are easy to implement and low variance, but the goal of scaling them comprehensively to large combinatorial distributions is still outstanding. Working within the perturbation model framework, we introduce stochastic softmax tricks, which generalize the Gumbel-Softmax trick to combinatorial spaces. Our framework is a unified perspective on existing relaxed estimators for perturbation models, and it contains many novel relaxations. We design structured relaxations for subset selection, spanning trees, arborescences, and others. When compared to less structured baselines, we find that stochastic softmax tricks can be used to train latent variable models that perform better and discover more latent structure.
\end{abstract}
 \section{Introduction}

Gradient computation is the methodological backbone of deep learning, but computing gradients is not always easy. Gradients with respect to parameters of the density of an integral are generally intractable, and one must resort to gradient estimators \citep{asmussen2007stochastic, mohamed2019gradientest}. Typical examples of objectives over densities are returns in reinforcement learning \citep{sutton2018reinforcement} or variational objectives for latent variable models \cite[e.g.,][]{kingma2014auto, rezende2014stochastic}. In this paper, we address {\em gradient estimation for discrete distributions} with an emphasis on latent variable models. We introduce a relaxed gradient estimation framework for combinatorial discrete distributions that generalizes the Gumbel-Softmax and related estimators \citep{maddison2016concrete, jang2016categorical}.

Relaxed gradient estimators incorporate bias in order to reduce variance. Most relaxed estimators are based on the Gumbel-Max trick \citep{luce1959individual, maddison2014astarsamp}, which reparameterizes distributions over one-hot binary vectors. The Gumbel-Softmax estimator is the simplest; it continuously approximates the Gumbel-Max trick to admit a reparameterization gradient \citep{kingma2014auto, rezende2014stochastic, ruiz2016generalized}. This is used to optimize the ``soft'' approximation of the loss as a surrogate for the ``hard'' discrete objective.

Adding structured latent variables to deep learning models is a promising direction for addressing a number of challenges:~improving interpretability (e.g., via latent variables for subset selection \citep{chen2018learning} or parse trees \cite{corro2018differentiable}), incorporating problem-specific constraints (e.g., via enforcing alignments \cite{mena2018learning}), and improving generalization (e.g., by modeling known algorithmic structure \cite{graves2014neural}).
Unfortunately, the vanilla Gumbel-Softmax cannot scale to distributions over large state spaces, and the development of structured relaxations has been piecemeal.

We introduce \emph{stochastic softmax tricks} (\rsmtabbrev s), which are a unified framework for designing structured relaxations of combinatorial distributions. They include relaxations for the above applications, as well as many novel ones.
To use an \rsmtabbrev{,} a modeler chooses from a class of models that we call \emph{stochastic argmax tricks} (\smtabbrev{}). These are instances of perturbation models \citep[e.g.,][]{papandreou2011perturb, hazan2012partition, tarlow2012randoms, gane2014learning}, and they induce a distribution over a finite set $\discreteset$ by optimizing
a linear objective (defined by random utility $U \in \R^n$) over $\discreteset$.
An \rsmtabbrev{} relaxes this \smtabbrev{} by combining a strongly convex regularizer with the random linear objective. The regularizer makes the solution a continuous, a.e. differentiable function of $U$ and appropriate for estimating gradients with respect to $U$'s parameters. The Gumbel-Softmax is a special case. Fig. \ref{fig:intro} provides a summary.

We test our relaxations in the Neural Relational Inference (NRI) \citep{kipf2018neural} and L2X \cite{chen2018learning} frameworks. Both NRI and L2X use variational losses over latent combinatorial distributions. When the latent structure in the model matches the true latent structure, we find that our relaxations encourage the unsupervised discovery of this combinatorial structure. This leads to models that are more interpretable and achieve stronger performance than less structured baselines. All proofs are in the Appendix.

\begin{figure}[t]
    \centering
    \begin{subfigure}[b]{0.246\textwidth}
        \includegraphics{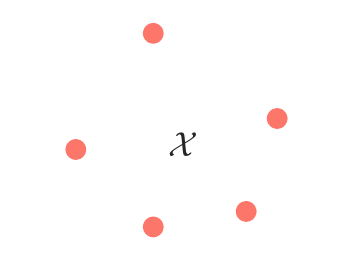}
        \caption*{Finite set}
    \end{subfigure}
    \hfill
    \begin{subfigure}[b]{0.245\textwidth}
        \includegraphics{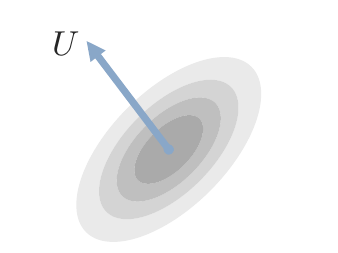}
        \caption*{Random utility}
    \end{subfigure}
    \hfill
    \begin{subfigure}[b]{0.245\textwidth}
        \includegraphics{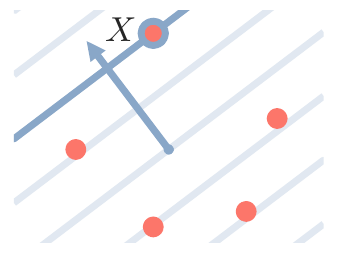}
        \caption*{Stoch. Argmax Trick}
    \end{subfigure}
    \hfill
    \begin{subfigure}[b]{0.245\textwidth}
        \includegraphics{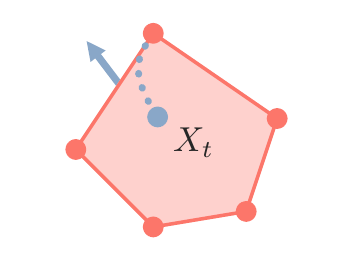}
        \caption*{Stoch. Softmax Trick}
    \end{subfigure}
    \caption{Stochastic softmax tricks relax discrete distributions that can be reparameterized as random linear programs. $X$ is the solution of a random linear program defined by a finite set $\discreteset$ and a random utility $U$ with parameters $\theta \in \R^m$. To design relaxed gradient estimators with respect to $\theta$, $X_{\temp}$ is the solution of a random convex program that continuously approximates $X$ from within the convex hull of $\discreteset$. The Gumbel-Softmax \citep{maddison2016concrete, jang2016categorical} is an example of a stochastic softmax trick.}\label{fig:intro}
    \vspace{-0.5\baselineskip}
\end{figure}
 \section{Problem Statement}
\label{sec:problemstmt}
Let $\abstractset$ be a non-empty, finite set of combinatorial objects, e.g. the spanning trees of a graph. To represent $\abstractset$, define the embeddings $\discreteset \subseteq \R^n$ of $\abstractset$ to be the image $\{ \embed(y) \mid y \in \abstractset\}$ of some embedding function $\embed : \abstractset \to \R^n$.\footnote{This is equivalent to the notion of sufficient statistics \cite{wainwright2008graphical}. We draw a distinction only to avoid confusion, because the distributions $p_{\theta}$ that we ultimately consider are not necessarily from the exponential family.} For example, if $\abstractset$ is the set of spanning trees of a graph with edges $E$, then we could enumerate $y_1, \ldots, y_{|\abstractset|}$ in $\abstractset$ and let $\embed(y)$ be the one-hot binary vector of length $|\abstractset|$, with $\embed(y)_i = 1$ iff $y = y_i$. This requires a very large ambient dimension $n = |\abstractset|$. Alternatively, in this case we could use a more efficient, structured representation: $\embed(y)$ could be a binary indicator vector of length $|E| \ll |\abstractset|$, with $\embed(y)_e = 1$ iff edge $e$ is in the tree $y$. See Fig. \ref{fig:embeddings} for visualizations and additional examples of structured binary representations. We assume that $\discreteset$ is convex independent.\footnote{Convex independence is the analog of linear independence for convex combinations.} 

Given a probability mass function $p_{\theta} : \discreteset \to (0, 1]$ that is differentiable in $\theta \in \R^m$, a loss function $\loss : \R^n \to \R$, and $X \sim p_{\theta}$, our ultimate goal is gradient-based optimization of $\expect[\loss(X)]$. Thus, we are concerned in this paper with the problem of estimating the derivatives of the expected loss,
\begin{equation}
    \label{eq:problem}
    \frac{d}{d \theta}\expect[\loss(X)] = \frac{d}{d \theta} \left(\sum\nolimits_{x
    \in \discreteset} \loss(x) p_{\theta}(x)\right).
\end{equation}

\section{Background on Gradient Estimation}
\label{sec:background}
Relaxed gradient estimators assume that $\loss$ is differentiable and use a change of variables to remove the dependence of $p_{\theta}$ on $\theta$, known as the reparameterization trick \citep{kingma2014auto, rezende2014stochastic}. The Gumbel-Softmax trick (GST) \citep{maddison2016concrete, jang2016categorical} is a simple relaxed gradient estimator for one-hot embeddings, which is based on the Gumbel-Max trick (GMT) \citep{luce1959individual, maddison2014astarsamp}. Let $\discreteset$ be the one-hot embeddings of $\abstractset$ and $p_{\theta}(x) \propto \exp(x^T\theta)$. The GMT is the following identity: for $X\sim p_{\theta}$ and $G_i + \theta_i \sim \Gumbel(\theta_i)$ indep.,
\begin{align}
\label{eq:gumbelmaxtrick} X \overset{d}{=} \arg \max\nolimits_{x \in \discreteset} \, (G+\theta)^T x.
\end{align}
Ideally, one would have a reparameterization estimator, $\expect[d \loss(X)/d \theta] = d \expect[\loss(X)]/d \theta$,\footnote{For a function $f(x_1, x_2)$, $\partial f(z_1, z_2) / \partial x_1$ is the partial derivative (e.g., a gradient vector) of $f$ in the first variable evaluated at $z_1, z_2$. $d f(z_1, z_2) / d x_1$ is the total derivative of $f$ in $x_1$ evaluated at $z_1, z_2$. For example, if $x = f(\theta)$, then $ d g(x, \theta)/ d\theta = (\partial g(x, \theta)/\partial x) (d f(\theta)/ d\theta) + \partial g(x, \theta)/\partial\theta$.} using the right-hand expression in \eqref{eq:gumbelmaxtrick}. Unfortunately, this fails. The problem is not the lack of differentiability, as normally reported. In fact, the argmax is differentiable almost everywhere. Instead it is the jump discontinuities in the argmax that invalidate this particular exchange of expectation and differentiation \citep[][Chap. 7.2]{lee2018reparameterization, asmussen2007stochastic}. The GST estimator \citep{maddison2016concrete, jang2016categorical} overcomes this by using the tempered softmax, $\softmax_{\temp}(u)_i = \exp(u_i/\temp) / \sum_{j=1}^n \exp(u_j/\temp)$ for $u \in \R^n, \temp > 0$, to continuously approximate $X$,
\begin{align}
\label{eq:gumbelsoftmaxestimator}
X_{\temp} = \softmax_{\temp}(G + \theta).
\end{align}
The relaxed estimator is $d \loss(X_t) / d \theta$. While this is a biased estimator of \eqref{eq:problem}, it is an unbiased estimator of $d\mathbb{E}[\loss(X_{\temp})]/d\theta$ and $X_t \to X$ a.s. as $t \to 0$. Thus, $d \loss(X_t) / d \theta$ is used for optimizing $\expect[\loss(X_{\temp})]$ as a surrogate for $\expect[\loss(X)]$, on which the final model is evaluated.

The score function estimator \citep{glynn1990likelihood, williams1992simple}, $\loss(X) \, \partial \log p_{\theta}(X) / \partial \theta$, is the classical alternative. It is a simple, unbiased estimator, but without highly engineered control variates, it suffers from high variance \citep{mnih2014neural}. Building on the score function estimator are a variety of estimators that require multiple evaluations of $\loss$ to reduce variance \citep{DBLP:journals/corr/GuLSM15, tucker2017rebar, grathwohl2018backpropagation, yin2018arm, Kool2020Estimating, aueb2015local}.  The advantages of relaxed estimators are the following: they only require a single evaluation of $\loss$, they are easy to implement using modern software packages \citep{abadi2016tensorflow, paszke2017automatic, jax2018github}, and, as reparameterization gradients, they tend to have low variance \citep{gal2016uncertainty}.

\begin{figure}[t]
\centering
\tabskip=0pt
\valign{#\cr
\hbox{
        \begin{subfigure}[b]{0.225\textwidth}
           \centering
            \includegraphics[width=0.75in]{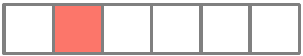}
            \caption*{One-hot vector}
        \end{subfigure}
    }
\vfill
\hbox{
        \begin{subfigure}[b]{0.225\textwidth}
            \centering
            \includegraphics[width=0.75in]{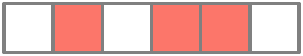}
            \caption*{$k$-hot vector}
        \end{subfigure}
    }\cr
\hbox{
    \begin{subfigure}[b]{0.225\textwidth}
        \centering
        \includegraphics[width=0.75in]{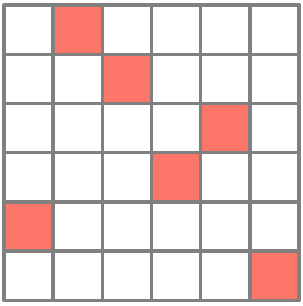}
        \caption*{Permutation matrix}
    \end{subfigure}
}\cr
\hbox{
    \begin{subfigure}[b]{0.225\textwidth}
        \centering
        \includegraphics[width=0.75in]{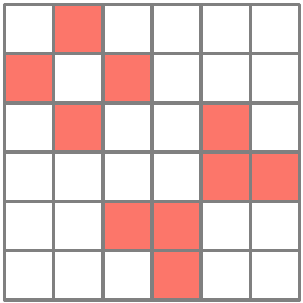}
        \caption*{Spanning tree adj. matrix}
    \end{subfigure}
}\cr
\hbox{
    \begin{subfigure}[b]{0.225\textwidth}
        \centering
        \includegraphics[width=0.75in]{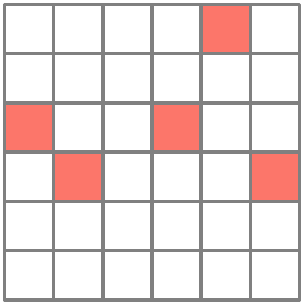}
        \caption*{Arborescence adj. matrix}
    \end{subfigure}
}\cr
}
\caption{Structured discrete objects can be represented by binary arrays. In these graphical representations, color indicates 1 and no color indicates 0. For example, ``Spanning tree'' is the adjacency matrix of an undirected spanning tree over 6 nodes; ``Arborescence'' is the adjacency matrix of a directed spanning tree rooted at node 3.}
\label{fig:embeddings}
\end{figure}
 \section{Stochastic Argmax Tricks}
\label{sec:smts}

Simulating a GST requires enumerating $|\abstractset|$ random variables, so it cannot scale. We overcome this by identifying generalizations of the GMT that can be relaxed and that scale to large $\abstractset$s by exploiting structured embeddings $\discreteset$. We call these \emph{stochastic argmax tricks} (\smtabbrev{s}), because they are perturbation models \citep{tarlow2012randoms, gane2014learning}, which can be relaxed into stochastic softmax tricks (Section \ref{sec:rsmts}).
\begin{definition}
\label{def:smt}
Given a non-empty, convex independent, finite set $\discreteset \subseteq \R^n$ and a random utility $U$ whose distribution is parameterized by $\theta \in \R^m$, a {\em stochastic argmax trick} for $X$ is the linear program,
\begin{equation}
    \label{eq:smt}
    X = \arg \max\nolimits_{x \in \discreteset} \,  U^T x.
\end{equation}
\end{definition}
The GMT is recovered with one-hot $\discreteset$ and $U \sim \Gumbel(\theta)$. We assume that \eqref{eq:smt} is a.s.~unique, which is guaranteed if $U$ a.s.~never lands in any particular lower dimensional subspace (Prop. \ref{prop:noisedistribution}, App. \ref{supp:sec:proofs}). Because efficient linear solvers are known for many structured $\discreteset$, \smtabbrev{s} are capable of scaling to very large $\abstractset$ \citep{schrijver2003combinatorial, kolmogorov2006convergent, koller2009probabilistic}. For example, if $\discreteset$ are the edge indicator vectors of spanning trees $\abstractset$, then \eqref{eq:smt} is the maximum spanning tree problem, which is solved by Kruskal's algorithm \citep{kruskal1956shortest}.

The role of the \smtabbrev{} in our framework is to reparameterize $p_{\theta}$ in \eqref{eq:problem}. Ideally, \emph{given} $p_{\theta}$, there would be an efficient (e.g., $\mathcal{O}(n)$) method for simulating \emph{some} $U$ such that the marginal of $X$ in \eqref{eq:smt} is $p_{\theta}$. The GMT shows that this is possible for one-hot $\discreteset$, but the situation is not so simple for structured $\discreteset$. Characterizing the marginal of $X$ in general is difficult \cite{tarlow2012randoms, hazan2013perturb}, but $U$ that are efficient to sample from typically induce conditional independencies in $p_{\theta}$ \citep{gane2014learning}. Therefore, we are not able to reparameterize an arbitrary $p_{\theta}$ on structured $\discreteset$. Instead, for structured $\discreteset$ we \emph{assume} that $p_{\theta}$ is reparameterized by \eqref{eq:smt}, and treat $U$ as a modeling choice. Thus, we caution against the standard approach of taking $U \sim \Gumbel(\theta)$ or $U \sim \Normal(\theta, \sigma^2I)$ without further analysis. Practically, in experiments we show that the difference in noise distribution can have a large impact on quantitative results. Theoretically, we show in App. \ref{supp:sec:fieldguide} that an \smtabbrev{} over directed spanning trees with negative exponential utilities has a more interpretable structure than the same \smtabbrev{} with Gumbel utilities. \section{Stochastic Softmax Tricks}
\label{sec:rsmts}

If we assume that $X \sim p_{\theta}$ is reparameterized as an \smtabbrev{}, then a stochastic softmax trick (\rsmtabbrev{)} is a random convex program with a solution that relaxes $X$. An \rsmtabbrev{} has a valid reparameterization gradient estimator. Thus, we propose using \rsmtabbrev{s} as surrogates for estimating gradients of \eqref{eq:problem}, a generalization of the Gumbel-Softmax approach. Because we want gradients with respect to $\theta$, we assume that $U$ is also reparameterizable.

Given an \smtabbrev{}, an \rsmtabbrev{} incorporates a strongly convex regularizer to the linear objective, and expands the state space to the convex hull of the embeddings $\discreteset = \{x_1, \ldots, x_m\} \subseteq \R^n$,
\begin{equation}
  P := \hull(\discreteset) := \left\{\sum\nolimits_{i=1}^m \lambda_i x_i  \, \middle\vert  \, \lambda_i \geq 0, \,  \sum\nolimits_{i=1}^m \lambda_i = 1\right\}.
\end{equation}
Expanding the state space to a convex polytope makes it path-connected, and the strongly convex regularizer ensures that the solutions are continuous over the polytope.
\begin{definition}
\label{def:rsmt}
Given a stochastic argmax trick $(\discreteset, U)$ where $P := \hull(\discreteset)$ and a proper, closed, strongly convex function $f : \R^n \to \{\R, \infty\}$ whose domain contains the relative interior of $P$, a {\em stochastic softmax trick} for $X$ at temperature $\temp > 0$ is the convex program,
\begin{equation}
    \label{eq:rsmt}
    X_{\temp} = \arg \max_{x \in P} \,  U^T x - \temp f(x)
\end{equation}
\end{definition}
For one-hot $\discreteset$, the Gumbel-Softmax is a special case of an \rsmtabbrev{} where $P$ is the probability simplex, $U \sim \Gumbel(\theta)$, and $f(x) = \sum_{i} x_i \log(x_i)$. Objectives like \eqref{eq:rsmt} have a long history in convex analysis \citep[e.g.,][Chap. 12]{rockafellar1970convex} and machine learning \citep[e.g.,][Chap. 3]{wainwright2008graphical}. In general, the difficulty of computing the \rsmtabbrev{} will depend on the interaction between $f$ and $\discreteset$.

$X_{\temp}$ is suitable as an approximation of $X$. At positive temperatures $\temp$, $X_{\temp}$ is a function of $U$ that ranges over the faces and relative interior of $P$. The degree of approximation is controlled by the temperature parameter, and as $\temp \to 0^+$, $X_{\temp}$ is driven to $X$ a.s.
\begin{restatable}{proposition}{approximation}
\label{prop:approximation}
If $X$ in Def. \ref{def:smt} is a.s. unique, then  for $X_t$ in Def. \ref{def:rsmt}, $\lim_{t \to 0^+} X_{\temp} = X$ a.s. If additionally $\loss : P \to \R$ is bounded and continuous, then $\lim_{t \to 0^+} \expect[\loss(X_{\temp})] = \expect[\loss(X)]$.
\end{restatable}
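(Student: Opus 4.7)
The plan is to fix a realization $u \in \R^n$ of $U$ for which the linear program in Def.~\ref{def:smt} has a unique optimizer $x^* \in \discreteset$; by hypothesis this event has probability one. Writing $x_\temp := \arg\max_{x \in P} u^T x - \temp f(x)$, I aim to show $x_\temp \to x^*$ as $\temp \to 0^+$, which gives $X_\temp \to X$ almost surely. The expectation statement then follows from the bounded convergence theorem applied to $\loss(X_\temp) \to \loss(X)$ a.s.\ (using continuity of $\loss$) together with boundedness of $\loss$ on $P$.

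The core step is a sandwich argument on $u^T x_\temp$. The upper bound $u^T x_\temp \leq u^T x^*$ is immediate, since $x_\temp \in P$ and the argmax of a linear functional over the polytope $P$ is a face whose vertices lie in $\discreteset$; uniqueness of $x^*$ among $\discreteset$ together with convex independence forces that face to be $\{x^*\}$. For the matching lower bound, pick any $y \in \mathrm{ri}(P)$ so that $f(y) < \infty$. Optimality of $x_\temp$ gives $u^T x_\temp - \temp f(x_\temp) \geq u^T y - \temp f(y)$. The key technical point is controlling $f(x_\temp)$: because $f$ is closed and proper and $\mathrm{dom}(f) \cap P$ is compact, $f$ attains a finite minimum $f_{\min}$ there, so $f(x_\temp) \geq f_{\min}$. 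Rearranging yields $u^T x_\temp \geq u^T y + \temp(f_{\min} - f(y))$, whence $\liminf_{\temp \to 0^+} u^T x_\temp \geq u^T y$.

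To close the sandwich at $x^*$ itself, which may lie on the relative boundary of $P$ where $f = +\infty$, I approach $x^*$ through $\mathrm{ri}(P)$: for any fixed $y_0 \in \mathrm{ri}(P)$ the points $y_\epsilon := (1-\epsilon) y_0 + \epsilon x^*$ lie in $\mathrm{ri}(P)$ for $\epsilon \in [0, 1)$ and converge to $x^*$. Applying the previous lower bound with $y = y_\epsilon$ and letting $\epsilon \to 1^-$ gives $\liminf u^T x_\temp \geq u^T x^*$, so $u^T x_\temp \to u^T x^*$. Compactness of $P$ then implies that $\{x_\temp\}_{\temp > 0}$ has accumulation points, and any such $\bar x$ satisfies $u^T \bar x = u^T x^*$ by continuity, hence lies in the face $F := \{x \in P : u^T x = u^T x^*\}$; the convex-independence argument again gives $F = \{x^*\}$, so $\bar x = x^*$. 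Every accumulation point being $x^*$ forces $x_\temp \to x^*$.

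The main obstacle I expect is precisely the case where $f(x^*) = +\infty$, which prevents substituting $y = x^*$ directly into the optimality inequality; the approximation through the relative interior is what resolves this. The remaining ingredients, namely the finite lower bound on $f$ over $\mathrm{dom}(f) \cap P$ (lower semicontinuity on a compact set) and the characterization of the optimal face via convex independence of $\discreteset$, are standard facts from convex analysis.
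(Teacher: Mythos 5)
Your proof is correct, but it takes a genuinely different route from the paper's in its final step. The paper works through the convex conjugate: it defines $g_{\temp}(x) = \temp(f(x)-f^*) + \delta_P(x)$, identifies $X_{\temp} = \grad g_{\temp}^*(U)$, proves pointwise convergence of the conjugate values $g_{\temp}^*(u) \to g_0^*(u)$, and then invokes Rockafellar's Theorem 25.7 (gradients of convex functions converge where the limit is differentiable), which in turn requires a separate lemma showing that the set of utilities with argmax $\{x^\star\}$ is open so that $g_0^*$ is differentiable on a neighborhood of $u$. You instead track the primal value $u^T x_{\temp}$ directly with the same relative-interior sandwich (the paper's lower bound on $\liminf g_{\temp_i}^*(u)$ via $y \in \relint(P)$ is essentially your lower bound, up to the bounded $\temp f(x_\temp)$ term), and then close with compactness of $P$ plus the fact that the optimal face of the linear program over $P$ is the singleton $\{x^\star\}$ (the paper's Lemma 1; note convex independence is not actually needed there, only uniqueness over $\discreteset$). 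Your route is more elementary and self-contained, avoiding both the conjugacy machinery and the open-neighborhood lemma; the paper's route amortizes better, since the same conjugate lemma simultaneously delivers the existence, uniqueness, continuity, and a.e.\ differentiability of $X_{\temp}$ needed for Proposition 2. One cosmetic imprecision: $\domain(f) \cap P$ need not be compact since $\domain(f)$ need not be closed, but the conclusion you need --- a finite lower bound $f_{\min}$ for $f$ on $P$ --- follows from lower semicontinuity of the closed proper $f$ on the compact set $P$, so nothing breaks. The dominated/bounded convergence step for the expectation is identical to the paper's.
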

It is common to consider temperature parameters that interpolate between marginal inference and a deterministic, most probable state. While superficially similar, our relaxation framework is different; as $\temp \to 0^+$, an \rsmtabbrev{} approaches \emph{a sample from the \smtabbrev{} model} as opposed to a deterministic state.

$X_{\temp}$ also admits a reparameterization trick. The \rsmtabbrev{} reparameterization gradient estimator given by,
\begin{equation}
  \label{eq:solution}
  \frac{d \loss(X_{\temp})}{d \theta} = \frac{\partial \loss(X_{\temp})}{\partial X_{\temp}} \frac{\partial X_{\temp}}{\partial U} \frac{d U}{d \theta}.
\end{equation}
If $\loss$ is differentiable on $P$, then this is an unbiased estimator\footnote{Technically, one needs an additional local Lipschitz condition for $\loss(X_{\temp})$ in $\theta$ \citep[Prop. 2.3, Chap. 7]{asmussen2007stochastic}.} of the gradient $d\mathbb{E}[\loss(X_{\temp})] / d \theta$, because $X_{\temp}$ is continuous and a.e. differentiable:
\begin{restatable}{proposition}{relaxation}
\label{prop:relaxation}
$X_{\temp}$ in Def. \ref{def:rsmt} exists, is unique, and is a.e. differentiable and continuous in $U$.
\end{restatable}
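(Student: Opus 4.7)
The plan is to reformulate $X_{\temp}$ as the gradient of a convex conjugate and invoke the standard duality between strong convexity and smoothness. Let $g := \temp f + \delta_P$, where $\delta_P$ is the convex indicator equal to $0$ on $P$ and $+\infty$ elsewhere. Since $f$ is proper and closed and $P$ is a nonempty compact convex polytope, $g$ is proper and closed; it inherits $\temp\mu$-strong convexity from $\temp f$ on its effective domain, which contains the nonempty relative interior of $P$. The defining optimization in Def.~\ref{def:rsmt} then reads $X_{\temp}(U) = \arg\max_{x \in \R^n}\{U^T x - g(x)\}$.

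For existence and uniqueness, I would observe that $\mathrm{dom}(g) \subseteq P$ is bounded, so the upper semi-continuous concave objective $x \mapsto U^T x - g(x)$ attains its supremum on the compact set $\mathrm{cl}(\mathrm{dom}(g))$; strong convexity of $g$ pins the maximizer to $\mathrm{dom}(g)$ and makes it unique. Hence $\Phi(U) := g^*(U)$ is finite on all of $\R^n$, and the Fenchel subgradient identity gives $\partial \Phi(U) = \arg\max_{x}\{U^T x - g(x)\} = \{X_{\temp}(U)\}$ for every $U$, so the finite convex function $\Phi$ is differentiable everywhere with $\nabla \Phi = X_{\temp}$.

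The main step is Lipschitz continuity of $X_{\temp}$ in $U$. One route is the classical strong-convex/smooth duality (e.g., Rockafellar--Wets, Thm.~12.60), which immediately yields that $\nabla g^* = X_{\temp}$ is globally $(\temp\mu)^{-1}$-Lipschitz. A self-contained version: summing the two strong-concavity inequalities at $(U_1, X_{\temp}(U_2))$ and $(U_2, X_{\temp}(U_1))$ yields
\[
\temp\mu\,\|X_{\temp}(U_1) - X_{\temp}(U_2)\|^2 \le (U_1 - U_2)^T\bigl(X_{\temp}(U_1) - X_{\temp}(U_2)\bigr),
\]
and Cauchy--Schwarz concludes. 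Global Lipschitz continuity delivers continuity everywhere and, by Rademacher's theorem, differentiability Lebesgue-almost everywhere on $\R^n$.

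The one delicate point I expect to spend care on is that $f$ is strongly convex only on its effective domain and may equal $+\infty$ on the relative boundary of $P$. This is handled transparently by working with the extension $g$: closedness of $f$ ensures $g$ is lower semi-continuous, and all the conjugate-duality results above apply verbatim to proper closed strongly convex functions whose effective domain is bounded.
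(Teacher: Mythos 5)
Your proof is correct and takes essentially the same route as the paper's: both pass to the convex conjugate of $g = \temp f + \delta_P$ (the paper subtracts the immaterial constant $\temp f^*$), identify $X_{\temp}$ with $\nabla g^*$ via the Fenchel subgradient identity, deduce everywhere-differentiability of $g^*$ and global Lipschitz continuity of $\nabla g^*$ from strong convexity of $g$, and conclude a.e.\ differentiability by Rademacher's theorem. The only differences are cosmetic, namely which references (Beck vs.\ Rockafellar--Wets) or self-contained inequalities supply the strong-convexity/smoothness duality.
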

In general, the Jacobian $\partial X_{\temp} / \partial U$  will need to be derived separately given a choice of $f$ and $\discreteset$. However, as pointed out by \citep{domke2010impdiff}, because the Jacobian of $X_{\temp}$ symmetric \citep[][Cor. 2.9]{rockafellar1999second}, local finite difference approximations can be used to approximate $d \loss(X_{\temp})/ d U$ (App. \ref{supp:sec:exp_details}). These finite difference approximations only require two additional calls to a solver for \eqref{eq:rsmt} and do not require additional evaluations of $\loss$. We found them to be helpful in a few experiments (c.f., Section \ref{sec:experiments}).

There are many, well-studied $f$ for which \eqref{eq:rsmt} is efficiently solvable. If $f(x) = \lVert x \rVert^2 / 2$, then $X_{\temp}$ is the Euclidean projection of $U/t$ onto $P$. Efficient projection algorithms exist for some convex sets \citep[see][and references therein]{wolfe1976finding, duchi2008efficient, liu2009efficient, blondel2019structured}, and more generic algorithms exist that only call linear solvers as subroutines \citep{niculae2018sparsemap}. In some of the settings we consider, generic negative-entropy-based relaxations are also applicable. We refer to relaxations with $f(x) = \sum\nolimits_{i=1}^n x_i \log(x_i)$ as \emph{categorical entropy relaxations} \citep[e.g.,][]{blondel2019structured, blondel2020learning}.
We refer to relaxations with $f(x) = \sum\nolimits_{i=1}^n x_i \log (x_i) + (1-x_i) \log(1-x_i)$ as \emph{binary entropy relaxations} \cite[e.g.,][]{amos2019limited}.

Marginal inference in exponential families is a rich source of \rsmtabbrev{} relaxations. Consider an exponential family over the finite set $\discreteset$ with natural parameters $u/\temp \in \R^n$ such that the probability of $x \in \discreteset$ is proportional to $\exp(u^Tx/\temp)$. The \emph{marginals} $\mu_{\temp} : \R^n \to \hull(\discreteset)$ of this family are solutions of a convex program in exactly the form \eqref{eq:rsmt} \citep{wainwright2008graphical}, i.e., there exists $A^* : \hull(\discreteset) \to \{\R, \infty\}$ such that,
\begin{equation}
  \label{eq:expfamilymarg}
  \mu_{\temp}(u) := \sum\nolimits_{x \in \discreteset} \frac{x\exp(u^Tx/\temp)}{\sum_{y \in \discreteset} \exp(u^T y/\temp)} = \arg\max_{x \in P} u^Tx - \temp A^*(x).
\end{equation}
 The definition of $A^*$, which generates $\mu_{\temp}$ in \eqref{eq:expfamilymarg}, can be found in \citep[][Thm. 3.4]{wainwright2008graphical}. $A^*$ is a kind of negative entropy and in our case it satisfies the assumptions in Def. \ref{def:rsmt}. Computing $\mu_{\temp}$ amounts to marginal inference in the exponential family, and efficient algorithms are known in many cases \citep[see][]{wainwright2008graphical, koller2009probabilistic}, including those we consider. We call $X_{\temp} = \mu_{\temp}(U)$ the \emph{exponential family entropy relaxation}.

Taken together, Prop. \ref{prop:approximation} and \ref{prop:relaxation} suggest our proposed use for \rsmtabbrev s: optimize $\expect[\loss(X_{\temp})]$ at a positive temperature, where unbiased gradient estimation is available, but evaluate $\expect[\loss(X)]$. We find that this works well in practice if the temperature used during optimization is treated as a hyperparameter and selected over a validation set. It is worth emphasizing that the choice of relaxation is unrelated to the distribution $p_{\theta}$ of $X$ in the corresponding \smtabbrev{}. $f$ is not only a modeling choice; it is a computational choice that will affect the cost of computing \eqref{eq:rsmt} and the quality of the gradient estimator.
 \section{Examples of Stochastic Softmax Tricks}
\label{sec:examples}

\begin{figure}[t]
    \centering
    \includegraphics[scale=0.66666]{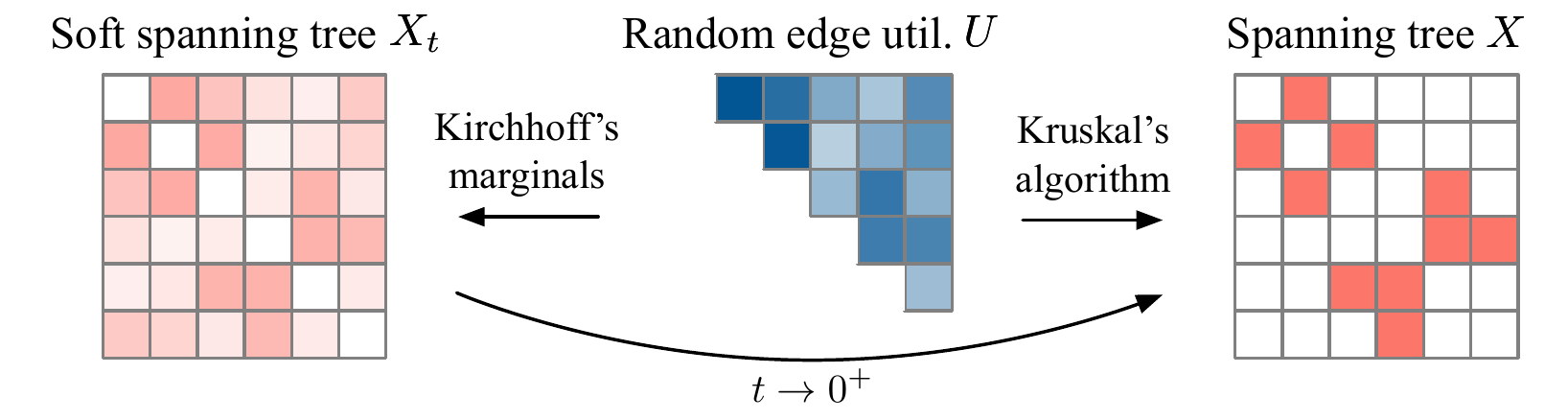}
    \caption{An example realization of a spanning tree \rsmtabbrev{} for an undirected graph. Middle: Random undirected edge utilities. Left: The random soft spanning tree $X_{\temp}$, represented as a weighted adjacency matrix, can be computed via Kirchhoff's Matrix-Tree theorem. Right: The random spanning tree $X$, represented as an adjacency matrix, can be computed with Kruskal's algorithm.}
    \label{fig:spanningtree}
    \vspace{-0.5\baselineskip}
\end{figure}

The Gumbel-Softmax \citep{maddison2016concrete, jang2016categorical} introduced neither the Gumbel-Max trick nor the softmax. The novelty of this work is neither the pertubation model framework nor the relaxation framework in isolation, but their combined use for gradient estimation. Here we layout some example \rsmtabbrev{s}, organized by the set $\abstractset$ with a choice of embeddings $\discreteset$. Bold italics indicates previously described relaxations, most of which are bespoke and not describable in our framework. Italics indicates our novel \rsmtabbrev s used in our experiments; some of these are also novel perturbation models. A complete discussion is in App. \ref{supp:sec:fieldguide}.

\textbf{Subset selection.} $\discreteset$ is the set of binary vectors indicating membership in the subsets of a finite set $S$. \emph{Indep. $S$} uses $U \sim \Logistic(\theta)$ and a binary entropy relaxation. $X$ and $X_{\temp}$ are computed with a dimension-wise step function or sigmoid, resp.

\textbf{$\mathbf{k}$-Subset selection.} $\discreteset$ is the set of binary vectors with a $k$-hot binary vectors indicating membership in a $k$-subset of a finite set $S$. All of the following \smtabbrev s use $U \sim \Gumbel(\theta)$. Our \rsmtabbrev{s} use the following relaxations: euclidean \citep{amos2017optnet} and categorical \citep{martins2017learning}, binary \citep{amos2019limited}, and exponential family \citep{swersky2012cardinality} entropies. $X$ is computed by sorting $U$ and setting the top $k$ elements to 1 \citep{blondel2019structured}. \emph{$R$ Top $k$} refers to our \rsmtabbrev{} with relaxation $R$. \emph{\textbf{L2X}} \citep{chen2018learning} and \emph{\textbf{SoftSub}} \citep{xie2019reparameterizable} are bespoke relaxations.

\textbf{Correlated $\mathbf{k}$-subset selection.} $\discreteset$ is the set of $(2n-1)$-dimensional binary vectors with a $k$-hot cardinality constraint on the first $n$ dimensions and a constraint that the $n-1$ dimensions indicate correlations between adjacent dimensions in the first $n$, i.e. the vertices of the correlation polytope of a chain \citep[][Ex. 3.8]{wainwright2008graphical} with an added cardinality constraint \citep{mezuman2013tighter}. \emph{Corr. Top $k$} uses $U_{1:n} \sim \Gumbel(\theta_{1:n})$, $U_{n+1:2n-1} = \theta_{n+1:2n-1}$, and the exponential family entropy relaxation. $X$ and $X_{\temp}$ can be computed with dynamic programs \citep{tarlow2012fast}, see App. \ref{supp:sec:fieldguide}.

\textbf{Perfect Bipartite Matchings.} $\discreteset$ is the set of $n \times n$ permutation matrices representing the perfect matchings of the complete bipartite graph $K_{n,n}$.  The \emph{\textbf{Gumbel-Sinkhorn}} \citep{mena2018learning} uses $U \sim \Gumbel(\theta)$ and a Shannon entropy relaxation. $X$ can be computed with the Hungarian method \citep{kuhn1955hungarian} and $X_{\temp}$ with the Sinkhorn algorithm \citep{sinkhorn1967concerning}. \emph{\textbf{Stochastic NeuralSort}} \citep{grover2018stochastic} uses correlated Gumbel-based utilities that induce a Plackett-Luce model and a bespoke relaxation.

\textbf{Undirected spanning trees.} Given a graph $(V, E)$, $\discreteset$ is the set of binary indicator vectors of the edge sets $T \subseteq E$ of undirected spanning trees. \emph{Spanning Tree} uses $U \sim \Gumbel(\theta)$ and the exponential family entropy relaxation. $X$ can be computed with Kruskal's algorithm \citep{kruskal1956shortest}, $X_{\temp}$ with Kirchhoff's matrix-tree theorem \citep[][Sec. 3.3]{koo2007matrixtree}, and both are represented as adjacency matrices, Fig. \ref{fig:spanningtree}.

\textbf{Rooted directed spanning trees.} Given a graph $(V, E)$, $\discreteset$ is the set of binary indicator vectors of the edge sets $T \subseteq E$ of $r$-rooted, directed spanning trees. \emph{Arborescence} uses  $U \sim \Gumbel(\theta)$ or $-U \sim \exponential(\theta)$ or $U\sim \Normal(\theta, I)$ and an exponential family entropy relaxation. $X$ can be computed with the Chu-Liu-Edmonds algorithm \citep{chu1965shortest, edmonds1967optimum}, $X_{\temp}$ with a directed version of Kirchhoff's matrix-tree theorem \citep[][Sec. 3.3]{koo2007matrixtree}, and both are represented as adjacency matrices. \emph{\textbf{Perturb \& Parse}}  \citep{corro2018differentiable} further restricts $\discreteset$ to be projective trees, uses $U \sim \Gumbel(\theta)$, and uses a bespoke relaxation. \section{Related Work}

Here we review perturbation models (PMs) and methods for relaxation more generally. \smtabbrev{s} are a subclass of PMs, which draw samples by optimizing a random objective. Perhaps the earliest example comes from Thurstonian ranking models \cite{thurstone1927law}, where a distribution over rankings is formed by sorting a vector of noisy scores. Perturb \& MAP models \cite{papandreou2011perturb,hazan2012partition} were designed to approximate the Gibbs distribution over a combinatorial output space using low-order, additive Gumbel noise.
Randomized Optimum models \cite{tarlow2012randoms,gane2014learning} are the most general class, which include non-additive noise distributions and non-linear objectives.
Recent work \citep{lorberbom2019direct} uses PMs to construct finite difference approximations of the expected loss' gradient. It requires optimizing a non-linear objective over $\discreteset$, and making this applicable to our settings would require significant innovation.

Using \rsmtabbrev{}s for gradient estimation requires differentiating through a convex program. This idea is not ours and is enjoying renewed interest in \cite{cvxpylayers2019, agrawal2019differentiating, amos2019differentiable}. In addition, specialized solutions have been proposed for quadratic programs \cite{amos2017optnet, martins2016softmax, blondel2020fast} and linear programs with entropic regularizers over various domains \cite{martins2017learning, amos2019limited, adams2011ranking, mena2018learning, blondel2020fast}.
In graphical modeling, several works have explored differentiating through marginal inference \cite{domke2010impdiff,ross-cvpr-11,poon2011sum,domke2013learning,swersky2012cardinality,djolonga2017differentiable} and our exponential family entropy relaxation builds on this work. The most superficially similar work is \citep{2020arXiv200208676B}, which uses noisy utilities to smooth the solutions of linear programs. In \citep{2020arXiv200208676B}, the noise is a tool for approximately relaxing a deterministic linear program. Our framework uses relaxations to approximate \emph{stochastic} linear programs.
 \section{Experiments}
\label{sec:experiments}
Our goal in these experiments was to evaluate the use of \rsmtabbrev{s} for learning distributions over structured latent spaces in deep structured models. We chose frameworks (NRI \citep{kipf2018neural}, L2X \citep{chen2018learning}, and a latent parse tree task) in which relaxed gradient estimators are the methods of choice, and investigated the effects of $\discreteset$, $f$, and $U$ on the task objective and on the unsupervised structure discovery. For NRI, we also implemented the standard single-loss-evaluation score function estimators (REINFORCE \citep{williams1992simple} and NVIL  \citep{mnih2014neural}), and the best \rsmtabbrev{} outperformed these baselines both in terms of average performance and variance, see App. \ref{supp:sec:addresults}. All \rsmtabbrev{} models were trained with the ``soft'' \rsmtabbrev{} and evaluated with the ``hard'' \smtabbrev{}. We optimized hyperparameters (including fixed training temperature $\temp$) using random search over multiple independent runs. We selected models on a validation set according to the best objective value obtained during training. All reported values are measured on a test set. Error bars are bootstrap standard errors over the model selection process. We refer to \rsmtabbrev{s} defined in Section
~\ref{sec:examples} with italics.  Details are in App. \ref{supp:sec:exp_details}. Code is available at \url{https://github.com/choidami/sst}.

\subsection{Neural Relational Inference (NRI) for Graph Layout}
\begin{table}[t]
  \refstepcounter{figure}
  \label{fig:graph_layout}
  \captionsetup{labelformat=andfigure}
  \caption{\emph{Spanning Tree} performs best on structure recovery, despite being trained on the ELBO. Test ELBO and structure recovery metrics are shown from models selected on valid. ELBO. Below: Test set example where \emph{Spanning Tree} recovers the ground truth latent graph perfectly.
  \vspace{5pt}}
  \label{table:graph_layout}
  \begin{subfigure}{\textwidth}
    \centering
    \begin{small}
    \adjustbox{max width=\textwidth}{    \begin{tabular}{@{}lcccccc@{}}
    \toprule
     & \multicolumn{3}{c}{$T=10$} & \multicolumn{3}{c}{$T=20$} \\ 
     \cmidrule(l){2-4} 
     \cmidrule(l){5-7}     
     Edge Distribution & ELBO & Edge Prec. & Edge Rec. & ELBO & Edge Prec. & Edge Rec. \\ 
     \midrule
    \emph{Indep. Directed Edges} \citep{kipf2018neural} 
    & $-1370 \pm 20$ 
    & $48 \pm 2$ 
    & $\mathbf{93 \pm 1}$
    & $-1340 \pm160$ 
    & $97 \pm 3$ 
    & $\mathbf{99 \pm 1}$  \\
    \emph{E.F. Ent. Top $|V|-1$}
    & $-2100 \pm 20$ 
    & $41 \pm 1$ 
    & $41 \pm 1$ 
    & $-1700 \pm 320$ 
    & $98\pm6$
    & $98\pm6$ \\
    \emph{Spanning Tree} 
    & $\mathbf{-1080\pm110}$ 
    & $\mathbf{91\pm3}$ 
    & $91\pm3$ 
    & $\mathbf{-1280\pm10}$ 
    & $\mathbf{99\pm1}$ 
    & $\mathbf{99\pm1}$ \\ 
    \bottomrule
    \end{tabular}}    \end{small}
  \end{subfigure}

  \begin{subfigure}{\textwidth}
      \centering
      \begin{subfigure}[b]{0.246\textwidth}
          \includegraphics{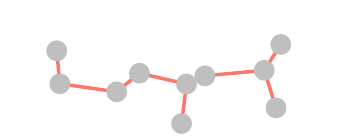}
          \caption*{Ground Truth}
      \end{subfigure}
      \hfill
      \begin{subfigure}[b]{0.245\textwidth}
          \includegraphics{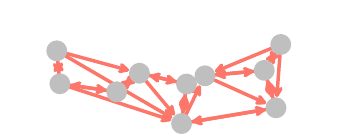}
          \caption*{\emph{Indep. Directed Edges}}
      \end{subfigure}
      \hfill
      \begin{subfigure}[b]{0.245\textwidth}
          \includegraphics{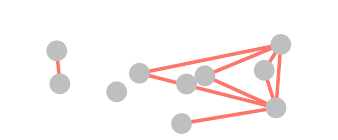}
          \caption*{\emph{E.F. Ent. Top $|V|-1$}}
      \end{subfigure}
      \hfill
      \begin{subfigure}[b]{0.245\textwidth}
          \includegraphics{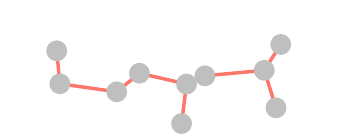}
          \caption*{\emph{Spanning Tree}}
      \end{subfigure}
  \end{subfigure}
  \vspace{-\baselineskip}
\end{table}
 
With NRI we investigated the use of \rsmtabbrev{}s for latent structure recovery and final performance. NRI is a graph neural network (GNN) model that samples a latent interaction graph $G = (V, E)$ and runs messages over the adjacency matrix to produce a distribution over an interacting particle system. NRI is trained as a variational autoencoder to maximize a lower bound (ELBO) on the marginal log-likelihood of the time series. We experimented with three \rsmtabbrev{s} for the encoder distribution: \emph{Indep. Binary} over directed edges, which is the baseline NRI encoder \citep{kipf2018neural}, \emph{E.F. Ent. Top $|V|-1$} over undirected edges, and \emph{Spanning Tree} over undirected edges. We computed the KL with respect to the random utility $U$ for all \rsmtabbrev{s}; see App. \ref{supp:sec:exp_details} for details. Our dataset consisted of latent prior spanning trees over 10 vertices sampled from the $\Gumbel(0)$ prior. Given a tree, we embed the vertices in $\R^2$ by applying $T \in \{10, 20\}$ iterations of a force-directed algorithm \citep{fruchterman1991graph}. The model saw particle locations at each iteration, not the underlying spanning tree.

We found that \emph{Spanning Tree} performed best, improving on both ELBO and the recovery of latent structure over the baseline \citep{kipf2018neural}. For structure recovery, we measured edge precision and recall against the ground truth adjacency matrix. It recovered the edge structure well even when given only a short series ($T=10$, Fig. \ref{fig:graph_layout}). Less structured baselines were only competitive on longer time series.
 \subsection{Unsupervised Parsing on ListOps}
We investigated the effect of $\discreteset{}$'s structure and of the utility distribution in a latent parse tree task. We used a simplified variant of the ListOps dataset \cite{nangia2018listops}, which contains sequences of prefix arithmetic expressions, e.g., \texttt{max[ 3 min[ 8 2 ]]}, that evaluate to an integer in $[0, 9]$. The arithmetic syntax induces a directed spanning tree rooted at its first token with directed edges from operators to operands. We modified the data by removing the \texttt{summod} operator, capping the maximum depth of the ground truth dependency parse, and capping the maximum length of a sequence. This simplifies the task considerably, but it makes the problem accessible to GNN models of fixed depth. Our models used a bi-LSTM encoder to produce a distribution over edges (directed or undirected) between all pairs of tokens, which induced a latent (di)graph. Predictions were made from the final embedding of the first token after passing messages in a GNN architecture over the latent graph. For undirected graphs, messages were passed in both directions. We experimented with the following \rsmtabbrev{s} for the edge distribution: \emph{Indep. Undirected Edges}, \emph{Spanning Tree}, \emph{Indep. Directed Edges}, and \emph{Arborescence} (with three separate utility distributions). \emph{Arborescence} was rooted at the first token. For baselines we used an unstructured LSTM and the GNN over the ground truth parse. All models were trained with cross-entropy to predict the integer evaluation of the sequence.

The best performing models were structured models whose structure better matched the true latent structure (Table \ref{table:listops_perform}). For each model, we measured the accuracy of its prediction (task accuracy). We measured both precision and recall with respect to the ground truth parse's adjacency matrix. \footnote{We exclude edges to and from the closing symbol ``$]$''. Its edge assignments cannot be learnt from the task objective, because the correct evaluation of an operation does not depend on the closing symbol.} Both tree-structured \rsmtabbrev{s} outperformed their independent edge counterparts on all metrics. Overall, \emph{Arborescence} achieved the best performance in terms of task accuracy and structure recovery. We found that the utility distribution significantly affected performance (Table \ref{table:listops_perform}). For example, while negative exponential utilities induce an interpretable distribution over arborescences, App. \ref{supp:sec:fieldguide}, we found that the multiplicative parameterization of exponentials made it difficult to train competitive models. Despite the LSTM baseline performing well on task accuracy, \emph{Arborescence} additionally learns to recover much of the latent parse tree.

\begin{table}
\caption{Matching ground truth structure (non-tree $\to$ tree) improves performance on ListOps. The utility distribution impacts performance. Test task accuracy and structure recovery metrics are shown from models selected on valid. task accuracy. Note that because we exclude edges to and from the closing symbol ``$]$'', recall is not equal to twice of precision for \emph{Spanning Tree} and precision is not equal to recall for \emph{Arborescence}.}
\label{table:listops_perform}
\begin{center}
\begin{small}
\begin{tabular}{@{}llccc@{}}
\toprule
Model & Edge Distribution & Task Acc. & Edge Precision & Edge Recall \\ 
\midrule
LSTM 
	& --- 
	& $92.1 \pm0.2$ 
	& --- 
	& ---  
\\ 
\cmidrule[0.15pt]{1-5}
\multirow{2}{*}{\shortstack[l]{GNN on\\latent graph}} 
& \emph{Indep. Undirected Edges} 
	& $89.4\pm0.6$ 
	& $20.1\pm2.1$ 
	& $45.4\pm6.5$ \\ 
& \emph{Spanning Tree} 
	& $91.2\pm 1.8$ 
	& $33.1\pm 2.9$ 
	& $47.9\pm 5.2$ \\ 
\cmidrule[0.15pt]{1-5}
\multirow{6}{*}{\shortstack[l]{GNN on\\latent digraph}}
&\emph{Indep. Directed Edges} 
	& $90.1\pm0.5$ 
	& $13.0\pm2.0$ 
	& $56.4\pm6.7$ \\ 
&\emph{Arborescence} &  &  &  \\
&\hspace{2mm} - Neg. Exp.
	& $71.5 \pm 1.4$ 
	& $23.2 \pm 10.2$ 
	& $20.0 \pm 6.0$ \\
&\hspace{2mm} - Gaussian 
	& $\mathbf{95.0 \pm 2.2}$ 
	& $65.3 \pm 3.7$ 
	& $60.8 \pm 7.3$ \\
&\hspace{2mm} - Gumbel 
	& $\mathbf{95.0 \pm 3.0}$ 
	& $\mathbf{75.5 \pm 7.0}$ 
	& $\mathbf{71.9 \pm 12.4}$ \\	
\cmidrule[0.15pt]{2-5}
& Ground Truth Edges 
	& $98.1 \pm 0.1$ 
	& 100 
	& 100 \\
\bottomrule
\end{tabular}
\end{small}
\end{center}
\vspace{-\baselineskip}
\end{table}  \subsection{Learning To Explain (L2X) Aspect Ratings}
With L2X we investigated the effect of the choice of relaxation. We used the BeerAdvocate dataset \citep{mcauley2012learning}, which contains reviews comprised of free-text feedback and ratings for multiple aspects (appearance, aroma, palate, and taste; Fig. \ref{fig:l2x:review}). Each sentence in the test set is annotated with the aspects that it describes, allowing us to define structure recovery metrics. We considered the L2X task of learning a distribution over $k$-subsets of words that best explain a given aspect rating.\footnote{While originally proposed for model interpretability, we used the original aspect ratings. This allowed us to use the sentence-level annotations for each aspect to facilitate comparisons between subset distributions.}  Our model used word embeddings from \citep{lei2016rationalizing} and convolutional neural networks with one (simple) and three (complex) layers to produce a distribution over $k$-hot binary latent masks. Given the latent masks, our model used a convolutional net to make predictions from masked embeddings. We used $k$ in $\{5, 10, 15\}$ and the following \rsmtabbrev{s} for the subset distribution: \{\emph{{Euclid., Cat. Ent., Bin. Ent., E.F. Ent.}}\} \emph{Top $k$} and \emph{Corr. Top $k$}. For baselines, we used bespoke relaxations designed for this task: \emph{{L2X}} \citep{chen2018learning} and \emph{{SoftSub}} \citep{xie2019reparameterizable}. We trained separate models for each aspect using mean squared error (MSE).  

We found that \rsmtabbrev{s} improve over bespoke relaxations (Table \ref{table:l2x_beer_aroma} for aspect aroma, others in App. \ref{supp:sec:addresults}). 
 For unsupervised discovery, we used the sentence-level annotations for each aspect to define ground truth subsets against which precision of the $k$-subsets was measured. \rsmtabbrev{s} tended to select subsets with higher precision across different architectures and cardinalities and achieve modest improvements in MSE. We did not find significant differences arising from the choice of regularizer $f$. Overall, the most structured \rsmtabbrev{}, \emph{Corr. Top $k$}, achieved the lowest MSE, highest precision and improved interpretability: The correlations in the model allowed it to select contiguous words, while subsets from less structured distributions were scattered (Fig. \ref{fig:l2x:review}).

\newcommand{\corrtopk}[1]{\hlmyred{#1}}
\newcommand{\topk}[1]{\hlmyblue{#1}}
\newcommand{\both}[1]{\hlmypurple{#1}}

\begin{table}[t]
  \centering
  \refstepcounter{figure}
  \label{fig:l2x:review}
  \captionsetup{labelformat=andfigure}
  \caption{For $k$-subset selection on aroma aspect, \rsmtabbrev{s} tend to outperform baseline relaxations. Test set MSE ($\times 10^{-2}$) and subset precision (\%) is shown for models selected on valid. MSE. Bottom: \emph{Corr. Top $k$} (red) selects contiguous words while \emph{Top $k$} (blue) picks scattered words.}   \label{table:l2x_beer_aroma}
  \begin{small}
  \adjustbox{max width=\textwidth}{  \begin{tabular}{@{}llcccccc@{}}
  \toprule
   & & \multicolumn{2}{c}{$k=5$} &  \multicolumn{2}{c}{$k=10$} &  \multicolumn{2}{c}{$k=15$} \\
   \cmidrule(lr){3-4} \cmidrule(lr){5-6} \cmidrule(lr){7-8}
  Model & Relaxation & MSE & Subs. Prec. &  MSE & Subs. Prec. &  MSE & Subs. Prec. \\
  \midrule
  \multirow{7}{*}{\shortstack[l]{Simple}}
  & \emph{L2X} \citep{chen2018learning}
  	& $3.6 \pm  0.1$ 
  	& $28.3 \pm 1.7$  
  	& $3.0 \pm  0.1$ 
  	& $25.5 \pm 1.2$ 
  	& $2.6 \pm  0.1$ 
  	& $25.5 \pm 0.4$ \\
  & \emph{SoftSub} \citep{xie2019reparameterizable}
  	& $3.6 \pm  0.1$ 
  	& $27.2 \pm 0.7$ 
  	& $3.0 \pm  0.1$ 
  	& $26.1 \pm 1.1$ 
  	& $2.6 \pm  0.1$ 
  	& $25.1 \pm 1.0$ \\
  \cmidrule[0.15pt]{2-8}
  & \emph{Euclid. Top $k$}
  	& $3.5 \pm 0.1$ 
  	& $25.8 \pm 0.8$ 
  	& $2.8 \pm  0.1$ 
  	& $32.9 \pm 1.2$ 
  	& $2.5 \pm  0.1$ 
  	& $29.0 \pm 0.3$ \\
  & \emph{Cat. Ent. Top $k$}
  	& $3.5 \pm 0.1$ 
  	& $26.4 \pm 2.0$ 
  	& $2.9 \pm  0.1$ 
  	& $32.1 \pm 0.4$ 
  	& $2.6 \pm  0.1$ 
  	& $28.7 \pm 0.5$ \\
  & \emph{Bin. Ent. Top $k$}
  	& $3.5 \pm  0.1$ 
  	& $29.2 \pm 2.0$ 
  	& $2.7 \pm  0.1$ 
  	& $33.6 \pm 0.6$ 
  	& $2.6 \pm  0.1$ 
  	& $28.8 \pm 0.4$ \\
  & \emph{E.F. Ent. Top $k$}
  	& $3.5 \pm  0.1$ 
  	& $28.8 \pm 1.7$ 
  	& $2.7 \pm  0.1$ 
  	& $32.8 \pm 0.5$ 
  	& $2.5 \pm  0.1$ 
  	& $29.2 \pm 0.8$ \\
  \cmidrule[0.15pt]{2-8}
  & \emph{Corr. Top $k$} 
  	& $\mathbf{2.9 \pm 0.1}$ 
  	& $\mathbf{63.1  \pm 5.3}$ 
  	& $\mathbf{2.5 \pm 0.1}$ 
  	& $\mathbf{53.1 \pm 0.9}$ 
  	& $\mathbf{2.4 \pm 0.1}$ 
  	& $\mathbf{45.5 \pm 2.7}$ \\
  \cmidrule[0.15pt]{1-8}
  \multirow{7}{*}{\shortstack[l]{Complex}}
  & \emph{L2X} \citep{chen2018learning}
  	& $2.7 \pm 0.1$ 
  	& $50.5 \pm 1.0$ 
  	& $2.6 \pm 0.1$ 
  	& $44.1 \pm 1.7$ 
  	& $2.4 \pm 0.1$ 
  	& $44.4 \pm 0.9$ \\
  & \emph{SoftSub} \citep{xie2019reparameterizable}
  	& $2.7 \pm 0.1$ 
  	& $57.1 \pm 3.6$ 
  	& $\mathbf{2.3 \pm 0.1}$ 
  	& $50.2 \pm 3.3$ 
  	& $2.3 \pm 0.1$ 
  	& $43.0 \pm 1.1$ \\
  \cmidrule[0.15pt]{2-8}
  & \emph{Euclid. Top $k$}
  	& $2.7 \pm 0.1$ 
  	& $61.3 \pm 1.2$ 
  	& $2.4 \pm 0.1$ 
  	& $52.8 \pm 1.1$ 
  	& $2.3 \pm 0.1$ 
  	& $44.1 \pm 1.2$ \\
  & \emph{Cat. Ent. Top $k$}
  	& $2.7 \pm 0.1$ 
  	& $61.9 \pm 1.2$ 
  	& $\mathbf{2.3 \pm 0.1}$ 
  	& $52.8 \pm 1.0$ 
  	& $2.3 \pm 0.1$ 
  	& $44.5 \pm 1.0$ \\
  & \emph{Bin. Ent. Top $k$}
  	& $2.6 \pm 0.1$ 
  	& $62.1 \pm 0.7$ 
  	& $\mathbf{2.3 \pm 0.1}$ 
  	& $50.7 \pm 0.9$ 
  	& $2.3 \pm 0.1$ 
  	& $44.8 \pm 0.8$ \\
  & \emph{E.F. Ent. Top $k$}  	
  	& $2.6 \pm 0.1$ 
  	& $59.5 \pm 0.9$ 
  	& $\mathbf{2.3 \pm 0.1}$ 
  	& $54.6 \pm 0.6$ 
  	& $2.2 \pm 0.1$ 
  	& $44.9 \pm 0.9$ \\
  \cmidrule[0.15pt]{2-8}
  & \emph{Corr. Top $k$} 
  	& $\mathbf{2.5 \pm 0.1}$ 
  	& $\mathbf{67.9 \pm 0.6}$ 
  	& $\mathbf{2.3 \pm 0.1}$ 
  	& $\mathbf{60.2 \pm 1.3}$ 
  	& $\mathbf{2.1 \pm 0.1}$ 
  	& $\mathbf{57.7 \pm 3.8}$ \\
  \bottomrule
  \end{tabular}}  \end{small}

  \vspace{2pt}
  \setlength{\fboxrule}{\heavyrulewidth}
  \begingroup\fboxsep=0.0025\textwidth
  \fbox{\parbox{0.995\textwidth}{
  \small{
  Pours a \topk{\strut slight tangerine} orange and \topk{\strut straw} yellow. The head is \topk{\strut nice} and bubbly but fades very quickly with a little lacing. \both{\strut Smells}  \corrtopk{\strut{} like Wheat and European hops}, a little yeast in there too. There is some \topk{\strut fruit} in there too, but you have to take a good \topk{\strut whiff} to get it. The taste is of wheat, a bit of malt, and \corrtopk{\strut a little }  \both{\strut fruit}  \corrtopk{ \strut flavour} in there too. Almost feels like drinking \topk{\strut Champagne}, medium mouthful otherwise. Easy to drink, but \topk{\strut not} something I'd be trying every night.
  \begin{center}
  \begin{tabular}{ccccc}
  Appearance: 3.5 & \textbf{Aroma: 4.0} & Palate: 4.5 & Taste: 4.0 & Overall: 4.0
  \end{tabular}
  \end{center}
  }}}
  \endgroup
  \vspace{-\baselineskip}
\end{table}   \section{Conclusion}
We introduced stochastic softmax tricks, which are random convex programs that capture a large class of relaxed distributions over structured, combinatorial spaces.  We designed stochastic softmax tricks for subset selection and a variety of spanning tree distributions. We tested their use in deep latent variable models, and found that they can be used to improve performance and to encourage the unsupervised discovery of true latent structure. There are future directions in this line of work. The relaxation framework can be generalized by modifying the constraint set or the utility distribution at positive temperatures. Some combinatorial objects might benefit from a more careful design of the utility distribution, while others, e.g., matchings, are still waiting to have their tricks designed.
 \section*{Broader Impact}
This work introduces methods and theory that have the potential for improving the interpretability of latent variable models.
While unfavorable consequences cannot be excluded, increased interpretability is generally considered a desirable property of machine learning models.
Given that this is foundational, methodologically-driven research, we refrain from speculating further.
 \section*{Acknowledgements and Disclosure of Funding}
We thank Daniel Johnson and Francisco Ruiz for their time and insightful feedback. We also thank Tamir Hazan, Yoon Kim, Andriy Mnih, and Rich Zemel for their valuable comments. MBP gratefully acknowledges support from the Max Planck ETH Center for Learning Systems. CJM is grateful for the support of the James D. Wolfensohn Fund at the Institute of Advanced Studies in Princeton, NJ. Resources used in preparing this research were provided, in part, by the Sustainable Chemical Processes through Catalysis (Suchcat) National Center of Competence in Research (NCCR), the Province of Ontario, the Government of Canada through CIFAR, and companies sponsoring the Vector Institute.
 \bibliography{refs}
\bibliographystyle{plain}
\newpage
\appendix

\section{Proofs for Stochastic Softmax Tricks}
\label{supp:sec:proofs}
\newcommand{\xmin}{x^{\star}}

\begin{lemma}
  \label{lemma:sameopt}
  Let $\discreteset \subseteq \R^n$ be a finite, non-empty set, $P := \hull(\discreteset)$, and $u \in \R^n$. We have,
  \begin{equation}
    \max_{x \in \discreteset} \, u^Tx = \max_{x \in P} \, u^Tx = \sup_{x \in \relint(P)} \, u^Tx .
\end{equation}
If $\max\nolimits_{x \in \discreteset} \,  u^T x$ has a unique solution $\xmin$, then $\xmin$ is also the unique solution of $\max\nolimits_{x \in P} \,  u^T x$.
\end{lemma}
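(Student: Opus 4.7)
The proof proceeds by establishing the two equalities and the uniqueness claim in turn, all by elementary convex-analytic arguments. The plan is to first handle the equality $\max_{x \in \discreteset} u^T x = \max_{x \in P} u^T x$ by a standard vertex argument: the $\le$ direction is immediate from $\discreteset \subseteq P$; the $\ge$ direction follows because any $y \in P$ can be written $y = \sum_{i=1}^m \lambda_i x_i$ with $\lambda_i \ge 0$, $\sum_i \lambda_i = 1$, so linearity gives $u^T y = \sum_i \lambda_i u^T x_i \le \max_i u^T x_i = \max_{x \in \discreteset} u^T x$.

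Next I would prove $\max_{x \in P} u^T x = \sup_{x \in \relint(P)} u^T x$. The $\ge$ direction is trivial since $\relint(P) \subseteq P$. For the reverse, let $\xmin \in P$ be a maximizer (which exists by compactness of $P$ and continuity of $u^T \cdot$). Since $P$ is a non-empty convex set, $\relint(P)$ is non-empty, so pick any $x_0 \in \relint(P)$. By the standard line-segment property of the relative interior (Rockafellar, Thm.~6.1), the points $x_\lambda := (1-\lambda)x_0 + \lambda \xmin$ lie in $\relint(P)$ for every $\lambda \in [0, 1)$. As $\lambda \to 1^-$, $u^T x_\lambda \to u^T \xmin$, so $\sup_{x \in \relint(P)} u^T x \ge u^T \xmin$, which is exactly the common value of the two maxima.

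For uniqueness, assume $\xmin \in \discreteset$ is the unique maximizer over $\discreteset$, and let $y \in P$ also achieve the common maximum value $M := u^T \xmin$. Writing $y = \sum_i \lambda_i x_i$ with $\lambda_i \ge 0$, $\sum_i \lambda_i = 1$, $x_i \in \discreteset$, the identity $M = u^T y = \sum_i \lambda_i u^T x_i$ combined with $u^T x_i \le M$ for all $i$ forces $u^T x_i = M$ whenever $\lambda_i > 0$. By the assumed uniqueness in $\discreteset$, this gives $x_i = \xmin$ whenever $\lambda_i > 0$, hence $y = \xmin$.

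I do not anticipate a genuinely hard step here; the only subtlety is justifying that $\relint(P)$ is non-empty and that the open segment from an interior point to a boundary point lies in $\relint(P)$, both of which are standard facts. Convex independence of $\discreteset$ plays no role in this lemma and is not invoked. The argument goes through cleanly with only elementary properties of convex hulls and relative interiors.
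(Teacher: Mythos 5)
Your proposal is correct and follows essentially the same route as the paper's proof: the convex-combination bound for $\max_{\discreteset}$ versus $\max_P$, the line-segment argument via Rockafellar's Theorem~6.1 for the relative-interior supremum (the paper writes the perturbation as $\xmin + \tfrac{\lambda}{\lVert u\rVert\lVert x-\xmin\rVert}(x-\xmin)$, which is your $x_\lambda$ in different clothing), and the observation that a maximizing convex combination must place all its weight on maximizers of $u^Tx$ over $\discreteset$. Your uniqueness step is stated slightly more cleanly than the paper's proof by contradiction, and you are right that convex independence is not needed here.
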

\begin{proof}
Assume w.l.o.g. that $\discreteset = \{x_1, \ldots, x_m\}$. Let $\xmin \in \arg\max_{x \in \discreteset} u^Tx$.

First, let us consider the linear program over $\discreteset$ vs. $P$. Clearly, $\max_{x \in \discreteset} u^Tx \leq \max_{x \in P} u^Tx$. In the other direction, for any $y \in P$, we can write $y = \sum_{i} \lambda_i x_i$ for $\lambda_i \geq 0$ such that $\sum_i \lambda_i = 1$, and
\begin{equation}
  u^T\xmin = \sum_i \lambda_i u^T\xmin \geq \sum_i \lambda_i u^Tx_i = u^Ty.
\end{equation}
Hence $\max_{x \in \discreteset} u^Tx \geq \max_{x \in P} u^Tx$. Thus $\xmin \in \arg \max_{x \in P} u^Tx$.

Second, let us consider the linear program over $P$ vs. $\relint(P)$. The cases $\xmin \in \relint(P)$ or $u = 0$ are trivial, so assume otherwise. Since $u^T\xmin \geq u^Tx$ for $x \in \relint(P)$, it suffices to show that for all $\epsilon > 0$ there exists $x_{\epsilon} \in \relint(P)$ such that $u^Tx_{\epsilon} > u^T\xmin - \epsilon$. To that end, take $x \in \relint(P)$ and $0 < \lambda < \min(\epsilon, \lVert u \rVert \lVert x - \xmin \rVert)$, and define
\begin{equation}
    x_{\epsilon} := \xmin + \frac{\lambda}{\lVert u \rVert \lVert x - \xmin \rVert}(x - \xmin).
\end{equation}
$x_{\epsilon} \in \relint(P)$ by \citep[][Thm 6.1]{rockafellar1970convex}. Thus, we get
\begin{equation}
    u^Tx_{\epsilon} = u^T\xmin + \lambda \frac{u^T(x - \xmin)}{\lVert u \rVert \lVert x - \xmin \rVert} > u^T\xmin - \epsilon
\end{equation}

Finally, suppose that $\xmin = \arg \max_{x \in \discreteset} u^Tx$ is unique, but $\arg \max_{x \in P} u^Tx$ contains more than just $\xmin$. We will show this implies a contradiction. Let $i^{\star}$ be the index $i \in \{1, \ldots, m\}$ such that $\xmin = x_{i^{\star}}$. Let $y \in \arg \max_{x \in P} u^Tx$ be such that $y \neq \xmin$. Then we may write $y = \sum_i \lambda_i x_i$ for $\lambda_i \geq 0$ such that $\sum_i \lambda_i = 1$. But this leads to a contradiction,
\begin{equation}
    u^T\xmin = \sum_{i \neq i^{\star}} \frac{\lambda_i}{1-\lambda_{i^{\star}}} u^Tx_i < \sum_{i \neq i^{\star}} \frac{\lambda_i}{1-\lambda_{i^{\star}}} u^T\xmin = u^T \xmin.
\end{equation}
\end{proof}

\begin{lemma}
\label{lemma:openinverseset}
Let $P \subseteq \R^n$ be a non-empty convex polytope and $x$ an extreme point of $P$. Define the set,
\begin{equation}
    U(x) = \left\{u \in \R^n : u^Tx > u^Ty, \; \forall y \in P \setminus \{x\}\right\}.
\end{equation}
This is the set of utility vectors of a linear program over $P$ whose argmax is the minimal face $\{x\} \subseteq P$. Then, for all $u \in U(x)$, there exists an open set $O \subseteq U(x)$ containing $u$.
\end{lemma}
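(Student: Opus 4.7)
The plan is to exploit the fact that $P$ is a polytope, so it has only finitely many extreme points, and to reduce the strict inequality against the whole (uncountable) set $P\setminus\{x\}$ to finitely many strict linear inequalities.

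First I would invoke the Minkowski/Krein--Milman-style fact for polytopes: $P=\hull(\{x,v_1,\dots,v_k\})$ where $v_1,\dots,v_k$ are the remaining extreme points of $P$. Then I claim
\[
U(x) \;=\; \bigl\{u\in\R^n : u^T(x-v_i)>0 \text{ for all } i=1,\dots,k\bigr\}.
\]
The inclusion $\subseteq$ is immediate since each $v_i\in P\setminus\{x\}$. For $\supseteq$, take any $y\in P\setminus\{x\}$ and write $y=\lambda_0 x+\sum_{i=1}^k \lambda_i v_i$ with $\lambda_i\ge 0$ and $\sum_i \lambda_i =1$; since $y\ne x$ and $x$ is extreme, at least one $\lambda_i$ with $i\ge 1$ is strictly positive, and then
\[
u^T x - u^T y \;=\; \sum_{i=1}^k \lambda_i\, u^T(x-v_i) \;>\;0,
\]
which gives $u\in U(x)$. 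This is essentially the same convex combination argument used in Lemma~\ref{lemma:sameopt}.

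Once this characterization is in hand, openness is immediate: each map $u\mapsto u^T(x-v_i)$ is linear, hence continuous, so $\{u : u^T(x-v_i)>0\}$ is open, and $U(x)$ is a finite intersection of open sets. Concretely, given $u\in U(x)$, set $\delta := \min_{1\le i\le k} u^T(x-v_i)>0$ and let $M:=\max_i \lVert x-v_i\rVert$; then for any $u'$ with $\lVert u'-u\rVert < \delta/(2M)$ we get $(u')^T(x-v_i) > \delta/2 > 0$ for every $i$, producing the required open neighborhood $O\subseteq U(x)$.

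The only step that requires any care is the reduction from ``all $y\in P\setminus\{x\}$'' to ``all other extreme points $v_i$,'' and the subtle point there is the use of the fact that $x$ is extreme (so that $\lambda_0<1$ whenever $y\ne x$, forcing some $\lambda_i>0$ for $i\ge 1$). Everything after that is a routine continuity argument, so I expect no substantive obstacle.
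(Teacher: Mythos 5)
Your proposal is correct and follows essentially the same route as the paper's proof: both reduce the condition over all of $P\setminus\{x\}$ to the finitely many strict inequalities $u^T(x-v_i)>0$ against the other extreme points via the convex-combination argument, and then obtain an open neighborhood from a finite intersection of open conditions (the paper intersects explicit balls $B_{r_i}(u)$, which is the same continuity estimate you give). The only cosmetic difference is that you package the reduction as an exact characterization of $U(x)$ as a finite intersection of open halfspaces before invoking openness, whereas the paper builds the open set first and verifies the inequality for general $y\in P$ afterwards.
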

\begin{proof}
Let $u \in U(x)$. Let $\{x_1, \ldots, x_m\} \subseteq P$ be the set of extreme points (there are finitely many), and assume w.l.o.g. that $x = x_m$. For each $x_i \neq x_m$ there exists $\epsilon_i > 0$ such that $u^T (x_m - x_i) > \epsilon_i$. Thus, for all $v$ in the open ball $B_{r_i}(u)$  of radius $r_i = \epsilon_i / \lVert x_m - x_i \rVert$ centered at $u$, we have
\begin{equation}
    v^T (x_m - x_i) = u^T (x_m - x_i) + (v - u)^T(x_m - x_i) > u^T (x_m - x_i) - \epsilon_i > 0.
\end{equation}
Define $O = \cap_{i=1}^{m-1} B_{r_i}(u)$. Note, $v^T x_m > v^T x_i$ for all $v \in O, x_i \neq x_m$. Now, let $y \in P \setminus \{x_m\}$. Because $P$ is the convex hull of the $x_i$ \cite[][Thm. 2.9]{bertsimas1997introduction}, we must have
\begin{equation}
   y = \sum_{i=1}^m \lambda_i x_i
\end{equation}
for $\lambda_i \geq 0$, $\sum_{i=1}^m \lambda_i = 1$ with at least one $\lambda_i > 0$ for $i < m$. Thus, for all $v \in O$
\begin{equation}
   v^Tx_m = \sum_{i=1}^m \lambda_i v^T x_m > \sum_{i=1}^m \lambda_i v^T x_i = v^T y.
\end{equation}
This implies that $O \subseteq U(x_m)$, which concludes the proof, as $O$ is open, convex, and contains $u$.
\end{proof}

\begin{lemma}
\label{lemma:cvxconj}
Given a non-empty, finite set $\discreteset \subseteq \R^n$ and a proper, closed, strongly convex function $f : \R^n \to \{\R, \infty\}$ whose domain contains the relative interior of $P := \hull(\discreteset)$, let $f^* = \min_{x \in \R^n} f(x)$, and $\delta_P(x)$ be the indicator function of the polytope $P$,
\begin{equation}
    \delta_P(x) = \begin{cases}
        0 & x \in P\\
        \infty & x \notin P
    \end{cases}.
\end{equation}
For $\temp \geq 0$, define
\begin{align}
    \label{eq:gtemp} g_{\temp}(x) &:= \temp (f(x) - f^*) + \delta_P(x),\\
    \label{eq:convexconjugate} g_{\temp}^*(u) &:= \sup_{x \in \R^n} u^Tx - g_{\temp}(x).
\end{align}
The following are true for $\temp > 0$,
\begin{enumerate}
    \item \eqref{eq:convexconjugate} has a unique solution, $g_{\temp}^*$ is continuously differentiable, twice differentiable a.e., and
    \begin{equation}
        \label{eq:gradcvxconj} \grad g_{\temp}^*(u) = \arg \max_{x \in \R^n} \, u^Tx - g_{\temp}(x).
    \end{equation}
    \item If $\max_{x \in \discreteset} u^Tx$ has a unique solution, then
    \begin{equation}
        \lim_{\temp \to 0^+} \grad g_{\temp}^*(u) = \arg \max_{x \in \discreteset}\,u^T x.
    \end{equation}
\end{enumerate}
\end{lemma}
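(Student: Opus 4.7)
The plan is to handle the two claims separately. For Part 1, I would invoke the standard duality between strong convexity and $C^{1,1}$ smoothness of the conjugate; for Part 2, I would run a compactness argument that extracts convergent subsequences of $x_t := \nabla g_t^*(u)$ and use Lemma~\ref{lemma:sameopt} to identify their common limit as $\xmin$.

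For Part 1, I would first observe that for $t > 0$ the function $g_t = t(f - f^*) + \delta_P$ is proper (its effective domain contains $\relint(P)$, which is non-empty), closed as a sum of closed functions, and strongly convex with modulus $t\mu$, where $\mu > 0$ is the modulus of $f$. Because the effective domain of $g_t$ is contained in the compact polytope $P$, the objective $u^Tx - g_t(x)$ is upper semicontinuous and strictly concave on a non-empty compact set, so the supremum in \eqref{eq:convexconjugate} is attained at a unique point; this immediately yields \eqref{eq:gradcvxconj} via the usual identification of $\nabla g_t^*$ with the (singleton) argmax. A standard result then gives that the conjugate of a $t\mu$-strongly convex function is finite on $\R^n$ and has $\frac{1}{t\mu}$-Lipschitz gradient, whence $g_t^*$ is $C^1$; applying Rademacher's theorem to the Lipschitz map $\nabla g_t^*$ yields a.e.\ differentiability, i.e.\ twice a.e.\ differentiability of $g_t^*$.

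For Part 2, let $\xmin$ denote the unique maximizer of $u^Tx$ over $\discreteset$, which by Lemma~\ref{lemma:sameopt} is also the unique maximizer over $P$. Since each $x_t \in P$ and $P$ is compact, it suffices to show that every subsequential limit $\bar{x}$ of $(x_{t_k})$ with $t_k \to 0^+$ equals $\xmin$. Fix any $y \in \relint(P)$, so that $f(y)$ is finite. Optimality of $x_{t_k}$ gives
\[
  u^T x_{t_k} - t_k\bigl(f(x_{t_k}) - f^*\bigr) \;\geq\; u^T y - t_k\bigl(f(y) - f^*\bigr),
\]
and since $f(x_{t_k}) - f^* \geq 0$, I may discard the penalty on the left to obtain $u^T x_{t_k} \geq u^T y - t_k(f(y) - f^*)$. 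Letting $k \to \infty$ gives $u^T \bar{x} \geq u^T y$, and taking the supremum over $y \in \relint(P)$ together with Lemma~\ref{lemma:sameopt} produces $u^T \bar{x} \geq u^T \xmin$. Since $\bar{x} \in P$ the reverse inequality is automatic, so the uniqueness clause of Lemma~\ref{lemma:sameopt} forces $\bar{x} = \xmin$.

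The main obstacle is the asymmetric way the penalty $t(f(x) - f^*)$ must be handled: because $f$ may blow up on the relative boundary of $P$ (think negative entropy on the simplex), one cannot uniformly upper-bound $f(x_{t_k})$. The argument above sidesteps this by exploiting that $f^* = \min f$ so the penalty is non-negative and can simply be dropped on the left, while a test point $y \in \relint(P)$ always delivers a finite value of $f$ on the right. A secondary point worth checking is that the envelope identity $\nabla g_t^*(u) = \arg\max_x u^Tx - g_t(x)$ remains valid even when the argmax lies on the relative boundary of $P$; strong convexity of $g_t$ together with finiteness of $g_t^*$ on $\R^n$ make this a routine application of standard conjugate calculus.
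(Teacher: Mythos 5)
Your proposal is correct. Part~1 is essentially the paper's argument: strong convexity of $g_{\temp}$ gives a unique maximizer, the conjugate of a strongly convex function has a Lipschitz (hence continuous) gradient, Rademacher gives a.e.\ twice differentiability, and the envelope identity \eqref{eq:gradcvxconj} is standard conjugate calculus. Part~2, however, takes a genuinely different route. The paper first proves convergence of the conjugate \emph{values}, $g_{\temp_i}^*(u) \to g_0^*(u)$ (using the same two inequalities you use --- testing against $y \in \relint(P)$ for the lower bound and dropping the non-negative penalty for the upper bound), then invokes Lemma~\ref{lemma:openinverseset} to produce an open neighborhood of $u$ on which $g_0^*$ is differentiable, and finally appeals to a theorem of Rockafellar (Thm.~25.7) on convergence of gradients of convex functions converging pointwise to a differentiable limit. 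You instead argue directly on the maximizers: since $x_{\temp} = \grad g_{\temp}^*(u)$ lives in the compact set $P$, you extract subsequential limits and identify each as $\xmin$ via the optimality inequality $u^Tx_{\temp_k} \geq u^Ty - \temp_k(f(y)-f^*)$ for $y \in \relint(P)$, the key observation being that $f^* = \min f$ lets you discard the penalty on the left while $y \in \relint(P)$ keeps the right side finite. Your argument is more elementary and self-contained --- it needs neither Lemma~\ref{lemma:openinverseset} nor the gradient-convergence theorem --- at the cost of being specific to the compact-domain setting; the paper's route establishes the stronger and more reusable fact that the conjugates themselves converge with their gradients. Both proofs are sound.
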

\begin{proof}
Note, $\relint(P) \subseteq \domain(g_{\temp}) \subseteq P$.
\begin{enumerate}
    \item Since $g_{\temp}$ is strongly convex \citep[][Lem. 5.20]{beck2017fom}, \eqref{eq:convexconjugate} has a unique maximum \citep[][Thm. 5.25]{beck2017fom}. Moreover, $g_{\temp}^*$ is differentiable everywhere in $\R^n$ and its gradient $\grad g_{\temp}^*$ is Lipschitz continuous \citep[][Thm. 5.26]{beck2017fom}. By \citep[][Thm 25.5]{rockafellar1970convex} $\grad g_{\temp}^*$ is a continuous function on $\R^n$. By Rademacher's theorem, $\grad g_{\temp}^*$ is a.e. differentiable. \eqref{eq:gradcvxconj} follows by standard properties of the convex conjugate \citep[][Thm. 23.5, Thm. 25.1]{rockafellar1970convex}.

    \item First, by Lemma \ref{lemma:sameopt},
    \begin{equation}
        g_0^*(u) = \max_{x \in P} u^Tx = \sup_{x \in \relint(P)} u^Tx = \max_{x \in \discreteset} u^Tx.
    \end{equation}
    Since $u$ is such that $u^Tx$ is uniquely maximized over $P$, $g_0^*$ is differentiable at $u$ by \citep[][Thm. 23.5, Thm. 25.1]{rockafellar1970convex}. Again by Lemma \ref{lemma:sameopt} we have
    \begin{equation}
        \label{eq:convexconjugatezero}
        \grad g_0^*(u) = \arg \max_{x \in P} u^Tx = \arg \max_{x \in \discreteset} u^Tx.
    \end{equation}
    Hence, our aim is to show $\lim_{\temp \to 0^+} \grad g_{\temp}^*(u) = \grad g_0^*(u)$. This is equivalent to showing that $\lim_{i \to \infty} \grad g_{\temp_i}^*(u) = \grad g_0^*(u)$ for any $\temp_i > 0$ such that $\temp_i \to 0$. Let $\temp_i$ be such a sequence.

    We will first show that $g_{\temp_i}^*(u) \to g_{0}^*(u)$. For any $y \in \relint(P)$,
\begin{align*}
    \liminf\limits_{i \to \infty} \, g_{\temp_i}^*(u) &= \lim_{i \to \infty} \inf_{j \geq i} \sup_{x \in \R^n} \, u^Tx - g_{\temp_j}(x)\\
    &\geq \lim_{i \to \infty} \inf_{j \geq i} \, u^Ty - g_{\temp_j}(y)\\
    &= u^Ty
\end{align*}
Thus,
\begin{align*}
\liminf\limits_{i \to \infty} g_{\temp_i}^*(u) \geq \sup_{y \in \relint(P)} u^Ty = g_0^*(u)
\end{align*}
Since $\temp(f(x) - f^*) \geq 0$ for all $x \in \R^n$, we also have
\begin{align*}
    \limsup\limits_{i \to \infty} g_{\temp_i}^*(u) &= \limsup\limits_{i \to \infty} \sup_{x \in \R^n} u^Tx - g_{\temp_i}(x)\\
    &\leq \limsup\limits_{i \to \infty} \sup_{x \in P} u^Tx = g_0^*(u).
\end{align*}
Thus $\lim_{i \to \infty} g_{\temp_i}^*(u) = g_0^*(u)$.

By Lemma \ref{lemma:openinverseset}, there exists an open convex set $O$ containing $u$ such that for all $v \in O$, $\grad g_0^*(u) = \arg \max_{x \in P} v^Tx$. Again, $g_0^*$ is differentiable on $O$ \citep[][Thm. 23.5, Thm. 25.1]{rockafellar1970convex}. Using this and the fact that $g_{\temp_i}^*(u) \to g_0^*(u)$, we get $\grad g_{\temp_i}^*(u) \to \grad g_0^*(u)$ \citep[][Thm. 25.7]{rockafellar1970convex}.
\end{enumerate}
\end{proof}

\approximation*
\begin{proof}
For $g_{\temp}^*$ defined in \eqref{eq:convexconjugate}, we have by Lemma \ref{lemma:cvxconj},
\begin{equation}
    \label{eq:redefxtemp}
    X_{\temp} = \arg \max_{x \in P} U^Tx - \temp f(x) = \grad g_{\temp}^*(U).
\end{equation}
If $X$ is a.s. unique, then again by Lemma \ref{lemma:cvxconj}
\begin{align*}
    \proba\left(\lim_{t \to 0^+} X_t = X \right) &= \proba\left(\lim_{t \to 0^+} \grad g_{\temp}^*(U) = \arg \max_{x \in \discreteset} \, U^Tx \right)\\
    &\geq \proba\left(X \text{ is unique} \right)\\
    &= 1
\end{align*}
The last bit of the proof follows from the dominated convergence theorem, since the loss in bounded on $P$ by assumption, so $|\loss(X_{\temp})|$ is surely bounded.
\end{proof}

\relaxation*
\begin{proof}
For $g_{\temp}^*$ defined in \eqref{eq:convexconjugate}, we have by Lemma \ref{lemma:cvxconj},
\begin{equation}
    \label{eq:redefxtemp}
    X_{\temp} = \arg \max_{x \in P} \,  U^T x - \temp f(x) = \grad g_t^*(U).
\end{equation}
Our result follows by the other results of Lemma \ref{lemma:cvxconj}.
\end{proof}

\begin{proposition}
\label{prop:noisedistribution}
If $\proba(U^Ta = 0) = 0$ for all $a \in \R^n$ such that $a \neq 0$, then $X$ in Def. \ref{def:smt} is a.s. unique.
\end{proposition}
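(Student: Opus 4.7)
The plan is to reduce non-uniqueness of the argmax to a tie between two distinct elements of $\discreteset$, and then bound the probability of such a tie by a finite union of hyperplane-like events, each of which is null by hypothesis.

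More concretely, first I would observe that $X$ fails to be unique exactly when there exist two distinct points $x_1, x_2 \in \discreteset$ with $U^T x_1 = U^T x_2 = \max_{x \in \discreteset} U^T x$. In particular, non-uniqueness implies $U^T(x_1 - x_2) = 0$ for some $x_1 \neq x_2$ in $\discreteset$. Hence the event $\{X \text{ is not unique}\}$ is contained in
\begin{equation*}
    \bigcup_{\substack{x_1, x_2 \in \discreteset \\ x_1 \neq x_2}} \{U^T(x_1 - x_2) = 0\}.
\end{equation*}

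Since $\discreteset$ is finite, this is a finite union. For each pair $(x_1, x_2)$ with $x_1 \neq x_2$, the vector $a = x_1 - x_2 \in \R^n$ is nonzero, so by the hypothesis $\proba(U^T a = 0) = 0$. A finite union of null sets is null, and hence $\proba(X \text{ is not unique}) = 0$.

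The only step requiring any care is the opening reduction, which uses nothing more than the definition of the argmax over a finite set; convex independence of $\discreteset$ is not actually needed here, since the non-uniqueness argument happens at the level of $\discreteset$ itself rather than $\hull(\discreteset)$. There is no serious obstacle: the proof is a short measure-theoretic bookkeeping argument and should fit in a few lines.
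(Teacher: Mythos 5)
Your proof is correct and follows essentially the same route as the paper's: both reduce non-uniqueness to the existence of a tie $U^T(x_1 - x_2) = 0$ between two distinct points of $\discreteset$ and conclude via a finite union of null events (the paper indexes the union by candidate argmax sets $S$ with $|S|>1$, you index it directly by pairs, which is an immaterial difference). No gaps.
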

\begin{proof}
It suffices to show that for all subsets $S \subseteq \discreteset$ with $|S| > 1$, the event $\{S = \arg \max_{x \in \discreteset} \, U^T x \}$ has zero measure. If $|S| > 1$, then we can pick two distinct points $x_1, x_2 \in S$ with $x_1 \neq x_2$. Now,
\begin{equation}
    \proba\left(S = \arg \max_{x \in \discreteset} \, U^T x \right) = \proba(\forall x \in S, \, U^T x = M) \leq \proba\left(U^T (x_1 - x_2) = 0\right) = 0.
\end{equation}
where $M = \max_{x \in \discreteset} \, U^T x$.
\end{proof}

\begin{proposition}
\label{prop:convexposition}
Let $\discreteset \subseteq \R^n$ be a non-empty finite set. If $\discreteset$ is convex independent, i.e., for all $x \in \discreteset$, $x \notin \hull(\discreteset \setminus \{x\})$, then $\discreteset$ is the set of extreme points of $\hull(\discreteset)$. In particular, any non-empty set of binary vectors $\discreteset \subseteq \{0, 1\}^n$ is convex independent and thus the set of extreme points of $\hull(\discreteset)$.
\end{proposition}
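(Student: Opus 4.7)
The plan is to prove two separate claims. First, I show that if $\discreteset = \{x_1, \ldots, x_m\}$ is convex independent, then every $x_i \in \discreteset$ is an extreme point of $P = \hull(\discreteset)$, and no other point of $P$ is extreme. Second, I show that any non-empty $\discreteset \subseteq \{0,1\}^n$ is automatically convex independent.

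For the first inclusion (every element of $\discreteset$ is extreme), I will take an arbitrary $x_k \in \discreteset$ and suppose $x_k = \lambda y + (1-\lambda) z$ for some $y, z \in P$ and $\lambda \in (0,1)$. Writing $y = \sum_i \alpha_i x_i$ and $z = \sum_i \beta_i x_i$ as convex combinations over $\discreteset$, I set $\gamma_i = \lambda \alpha_i + (1-\lambda)\beta_i$, so that $x_k = \sum_i \gamma_i x_i$ with $\gamma_i \geq 0$ and $\sum_i \gamma_i = 1$. Rearranging yields $(1-\gamma_k) x_k = \sum_{i \neq k} \gamma_i x_i$. If $\gamma_k < 1$ I can divide by $1-\gamma_k$ to exhibit $x_k$ as a convex combination of $\discreteset \setminus \{x_k\}$, contradicting convex independence; hence $\gamma_k = 1$, forcing $\gamma_i = 0$ for $i \neq k$, and since $\lambda, 1-\lambda > 0$ and $\alpha_i, \beta_i \geq 0$, this gives $\alpha_i = \beta_i = 0$ for $i \neq k$, so $y = z = x_k$. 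Thus $x_k$ is extreme.

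For the converse (every extreme point of $P$ lies in $\discreteset$), I recall that the convex hull of finitely many points is a polytope, and any point of $P$ can be written as a convex combination of elements of $\discreteset$; an extreme point of $P$ cannot be a non-trivial convex combination of two distinct points of $P$, so it must equal some $x_i \in \discreteset$. This is standard and can be cited from \citep[][Thm.~2.9]{bertsimas1997introduction}, just as in Lemma~\ref{lemma:openinverseset}.

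For the second part, I verify convex independence for any $\discreteset \subseteq \{0,1\}^n$. Suppose for contradiction some $x \in \discreteset$ satisfies $x = \sum_i \lambda_i y_i$ with $y_i \in \discreteset \setminus \{x\}$, $\lambda_i \geq 0$, $\sum_i \lambda_i = 1$. For each coordinate $j$ with $x_j = 0$, the identity $0 = \sum_i \lambda_i (y_i)_j$ with non-negative summands forces $(y_i)_j = 0$ whenever $\lambda_i > 0$; symmetrically, for $x_j = 1$ the identity $1 = \sum_i \lambda_i (y_i)_j$ with $(y_i)_j \in \{0,1\}$ forces $(y_i)_j = 1$ whenever $\lambda_i > 0$. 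Thus $y_i = x$ for every active $i$, contradicting $y_i \in \discreteset \setminus \{x\}$. No step presents a real obstacle; the most delicate bookkeeping is the coefficient manipulation in the first inclusion, which is routine.
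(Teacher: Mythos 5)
Your proof is correct, and for the main claim (convex independence implies that $\discreteset$ is exactly the set of extreme points of $\hull(\discreteset)$) it follows essentially the same route as the paper: pool the two convex combinations into coefficients $\gamma_i = \lambda\alpha_i + (1-\lambda)\beta_i$, observe that $\gamma_k < 1$ would let you renormalize and exhibit $x_k \in \hull(\discreteset\setminus\{x_k\})$, and invoke the standard fact that extreme points of a polytope lie among its generators. Your bookkeeping is if anything slightly cleaner, since you conclude $y = z = x_k$ directly rather than assuming $y, z \neq x_k$ up front. The only genuine divergence is in the binary-vector claim: the paper constructs the linear functional $c = x - \tfrac{1}{2}\mathbf{1}$ and derives a strict inequality $c^Tx_i < c^Tx$ (a separating-hyperplane argument), whereas you argue coordinate-by-coordinate that any convex representation of $x$ forces every actively weighted $y_i$ to agree with $x$ in each coordinate. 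Both are short and elementary; yours avoids introducing the auxiliary functional, while the paper's generalizes more readily to settings where one wants an explicit witness $c$ certifying that $x$ is a vertex.
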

\begin{proof}
Let $\discreteset = \{x_1, \ldots, x_m\}$. The fact that the extreme points of $\hull(\discreteset)$ are in $\discreteset$ is trivial. In the other direction, it is enough to show that $x_m$ is an extreme point. Assume $x_m \in \discreteset$ is not an extreme point of $\hull(\discreteset)$. Then by definition, we can write $x_m = \lambda y + (1-\lambda) z$ for $y,z \in \hull(\discreteset)$, $\lambda \in (0,1)$ with $y \neq x_m$ and $z \neq x_m$. Then, we have that
\begin{equation}
    \label{eq:convexpositioncontradiction}
    x_m = \sum_{i=1}^{m-1} \frac{\lambda \alpha_i + (1-\lambda) \beta_i}{ 1 - \lambda \alpha_m - (1-\lambda) \beta_m} x_i
\end{equation}
for some sequences $\alpha_i, \beta_i \geq 0$ such that $\sum_{i=1}^m \alpha_i = \sum_{i=1}^m \beta_i = 1$ and $\alpha_m, \beta_m < 1$. This is clearly a contradiction of our assumption that $x_m \notin \hull(\discreteset \setminus \{x_m\})$, since the weights in the summation \eqref{eq:convexpositioncontradiction} sum to unity. This implies that $\discreteset$ are the extreme points of $\hull(\discreteset)$.

Let $\discreteset \subseteq \{0,1\}^n$. It is enough to show that $x_m \notin \hull(\{x_1, \ldots, x_{m-1}\})$. Assume this is not the case. Let $c = x_m - 1/2 \in \R^n$, and note that $c^T x_i < c^T x_m$ for all $i \neq m$ when $x_i$ are distinct binary vectors. But, this leads to a contradiction. By assumption we can express $x_m$ as a convex combination of $x_1, \ldots, x_{m-1}$. Thus, there exists $\lambda_i \geq 0$ such that $\sum_{i=1}^{m-1} \lambda_i = 1$, and
\begin{equation}
    c^T x_m = \sum_{i=1}^{m-1} \lambda_i c^T x_i < \sum_{i=1}^{m-1} \lambda_i c^T x_m = c^T x_m.
\end{equation}
\end{proof}
 \section{An Abbreviated Field Guide to Stochastic Softmax Tricks}
\label{supp:sec:fieldguide}

\subsection{Introduction}
\label{supp:sec:fieldguide:intro}
\paragraph{Overview.}  This is a short field guide to some stochastic softmax tricks (\rsmtabbrev{s}) and their associated stochastic argmax tricks (\smtabbrev{s}). There are many potential \rsmtabbrev{s} not discussed here. We assume throughout this Appendix that readers are completely familiar with main text and its notation; we do not review it. In particular, we follow the problem definition and notation of Section \ref{sec:problemstmt}, the definition and notation of \smtabbrev{s} in Section \ref{sec:smts}, and the definition and notation of \rsmtabbrev{s} in Section \ref{sec:rsmts}.

This field guide is organized by the abstract set $\abstractset$. For each $\abstractset$, we identify an appropriate set $\discreteset \subseteq \R^n$ of structured embeddings. We discuss utility distributions used in the experiments. In some cases, we can provide a simple, ``closed-form'', categorical sampling process for $X$, i.e., a generalization of the Gumbel-Max trick. We also cover potential relaxations used in the experiments. In the remainder of this introduction, we introduce basic concepts that recur throughout the field guide.

\paragraph{Notation.} Given a finite set $S$, the indicator vector $x_T$ of a subset $T \subseteq S$ is the binary vector $x_T := (x_s)_{s \in S}$ such that $x_s = 1$ if $s \in T$ and $x_s = 0$ if $s \notin T$. For example, given an graph $G = (V, E)$, let $T$ be the edges of a spanning tree (ignoring the direction of edges). The indicator vector $x_T$ of $T$ is the vector $(x_e)_{e \in E}$ with $x_e = 1$ if $e$ is in the tree and $x_e = 0$ if $e$ is not.

$X \sim \mathcal{D}(\theta, Y)$ means that $X$ is distributed according to $\mathcal{D}$, which takes arguments $\theta$ and $Y$. Unless otherwise stated, $X$ is conditionally independent from all other random variables given $\theta, Y$. For multidimensional $U \in \R^n$, we use the same notation:
\begin{align}
  U \sim \exponential(\lambda) \iff U_i \sim \exponential(\lambda_i) \text{ independent}
\end{align}
Given $A \subseteq \{1, \ldots, n\}$ and $\lambda_i \in (0, \infty]$ for $0 < i \leq n$, the following notation,
\begin{align}
  K \sim \lambda_i \mathbf{1}_A(i),
\end{align}
means that $K$ is a random integer selected from $A$ with probability proportional to $\lambda_i$. If any $\lambda_i = \infty$, then we interpret this as a uniform random integer from the integers $i \in A$ with $\lambda_i = \infty$.

\paragraph{Basic properties of exponentials and Gumbels.} The properties of Gumbels and exponentials are central to \smtabbrev{s} that have simple descriptions for the marginal $p_{\theta}$. We review the important ones here. These are not new; many have been used for more elaborate algorithms that manipulate Gumbels \citep[e.g.,][]{maddison2014astarsamp}.

A Gumbel random variable $G \sim \Gumbel(\theta)$ for $\theta \in \R^n$ is a location family distribution, which can be simulated using the identity
\begin{equation}
  G \overset{d}{=} \theta - \log(-\log U),
\end{equation}
for $U \sim \uniform(0,1)$. An exponential random variable $E \sim \exponential(\lambda)$ for rate $\lambda > 0$ can be simulated using the identity
\begin{equation}
  E \overset{d}{=} -\log U / \lambda,
\end{equation}
for $U \sim \uniform(0,1)$. Any result for exponentials immediately becomes a result for Gumbels, because they are monotonically related:
\begin{proposition}
  \label{prop:gumbelexprelationship}
    If $E \sim \exponential(\lambda)$, then $-\log E \sim \Gumbel(\log \lambda)$.
\end{proposition}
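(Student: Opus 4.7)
The plan is a direct computation of the cumulative distribution function (CDF) of $-\log E$ and a comparison with the Gumbel CDF. This is essentially a one-step change-of-variables argument, so there is no real obstacle; the main care is simply in keeping the location parameter and the exponential rate straight.

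First, I would recall the defining CDFs. For $E \sim \exponential(\lambda)$ with $\lambda > 0$, the survival function on $(0, \infty)$ is $\proba(E \geq e) = \exp(-\lambda e)$. For $G \sim \Gumbel(\mu)$ with $\mu \in \R$ (using the paper's location-family convention $G \overset{d}{=} \mu - \log(-\log U)$ with $U \sim \uniform(0,1)$), one computes directly that the CDF is
\begin{equation}
    \proba(G \leq g) = \exp\bigl(-\exp(-(g - \mu))\bigr), \qquad g \in \R.
\end{equation}

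Next, I would set $Y = -\log E$, which is well-defined a.s.\ since $E > 0$ a.s., and compute the CDF of $Y$: for any $y \in \R$,
\begin{equation}
    \proba(Y \leq y) = \proba(-\log E \leq y) = \proba(E \geq e^{-y}) = \exp(-\lambda e^{-y}).
\end{equation}
Rewriting $\lambda = \exp(\log \lambda)$ yields $\proba(Y \leq y) = \exp(-\exp(-(y - \log \lambda)))$, which is exactly the CDF of $\Gumbel(\log \lambda)$. Since CDFs characterize distributions on $\R$, this gives $-\log E \sim \Gumbel(\log \lambda)$, completing the proof. No technical obstacle arises, as the transformation $e \mapsto -\log e$ is a smooth bijection from $(0,\infty)$ onto $\R$.
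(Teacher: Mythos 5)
Your proof is correct. It takes a mildly different route from the paper's: the paper composes the two simulation identities it states just above the proposition, writing $E \overset{d}{=} -\log U/\lambda$ for $U \sim \uniform(0,1)$ and hence $-\log E \overset{d}{=} -\log(-\log U) + \log\lambda$, which is the definition of $\Gumbel(\log\lambda)$ in the paper's location-family convention. You instead verify the claim by a direct CDF computation, $\proba(-\log E \leq y) = \proba(E \geq e^{-y}) = \exp(-\lambda e^{-y}) = \exp(-\exp(-(y-\log\lambda)))$. The two arguments are equally short; the paper's version is a pure equality-in-distribution manipulation that reuses the representations it has already set up, while yours is self-contained in that it does not presuppose the correctness of those representations but does require quoting (or deriving) the Gumbel CDF under the stated convention, which you do correctly. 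Either is acceptable.
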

\begin{proof}
  If $U \sim \uniform(0,1)$, then $-\log E \overset{d}{=} -\log(-\log U) + \log\lambda \sim \Gumbel(\log \lambda)$.
\end{proof}
Although we prove results for exponentials, using their monotonic relationship, all of these results have analogs from Gumbels.

The properties of exponentials are summarized in the following proposition.
\begin{proposition}
  \label{prop:exponentialmintrick}
  If $E_i \sim \exponential(\lambda_i)$ independent for $\lambda_i > 0$ and $i \in \{1, \ldots, n\}$, then
  \begin{enumerate}
    \item $\arg \min_i E_i \sim \lambda_i$,
    \item $\min_i E_i \sim \exponential(\sum\nolimits_{i=1}^n \lambda_i)$,
    \item $\min_i E_i$ and $\arg \min_i E_i$ are independent,
    \item Given $K = \arg \min_i E_i$ and $E_K = \min_i E_i$, $E_i$ for $i \neq K$ are conditionally, mutually independent; exponentially distributed with rates $\lambda_i$; and truncated to be larger than $E_K$.
  \end{enumerate}
\end{proposition}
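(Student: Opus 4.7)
The plan is to handle parts 1--3 in one stroke by computing the joint distribution of $K := \arg\min_i E_i$ and $M := \min_i E_i$, and then to deal with part 4 via a conditional density argument. Throughout, let $\Lambda := \sum_{i=1}^n \lambda_i$.

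For the joint, I would note that $\{K = k,\, M > t\}$ is the event $\{E_k > t\} \cap \{E_k < E_i \text{ for all } i \neq k\}$; ties have probability zero, so strict vs.\ non-strict inequalities are immaterial. Conditioning on the value of $E_k$ and using independence together with $P(E_i > s) = e^{-\lambda_i s}$,
\[ P(K = k,\, M > t) \;=\; \int_t^\infty \lambda_k e^{-\lambda_k s} \prod_{i \neq k} e^{-\lambda_i s}\, ds \;=\; \frac{\lambda_k}{\Lambda}\, e^{-\Lambda t}. \]
Parts 1--3 now fall out mechanically. Setting $t = 0$ gives $P(K = k) = \lambda_k / \Lambda$ (part 1); summing over $k$ gives $P(M > t) = e^{-\Lambda t}$, so $M \sim \exponential(\Lambda)$ (part 2); and because the joint factorizes as $P(K = k)\, P(M > t)$, $K$ and $M$ are independent (part 3).

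For part 4, I would condition using densities. The joint density of $(E_1, \ldots, E_n)$ restricted to $\{K = k\}$ is $\prod_i \lambda_i e^{-\lambda_i e_i}\, \mathbf{1}[e_k < e_i \text{ for } i \neq k]$, while from the computation above the joint density of $(K, E_K)$ at $(k, s)$ is $\lambda_k e^{-\Lambda s}$. Dividing and substituting $e_k = s$, the conditional density of $(E_i)_{i \neq k}$ given $K = k$ and $E_K = s$ is
\[ \prod_{i \neq k} \lambda_i\, e^{-\lambda_i (e_i - s)}\, \mathbf{1}[e_i > s], \]
which factorizes into independent $\exponential(\lambda_i)$ densities each truncated to $(s, \infty)$, proving part 4. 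The only delicate point in the whole argument is conditioning on the null event $\{E_K = s\}$ in part 4, which I would justify simply by working throughout with joint and conditional densities; alternatively, one can use the independence established in part 3 to reduce to conditioning on $M = s$ alone and then invoke the memoryless property of the exponential coordinate-wise.
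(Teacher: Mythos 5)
Your proof is correct and takes essentially the same route as the paper: both arguments factor the joint density $\prod_i \lambda_i e^{-\lambda_i e_i}\,\mathbf{1}[e_i \geq e_k]$ into the marginal of $K$, the density of the minimum, and a product of truncated exponential densities for the remaining coordinates. The only difference is presentational—you compute the marginal of $(K, M)$ by explicit integration and then divide densities for part 4, whereas the paper reads all four claims off a single algebraic rearrangement of the joint.
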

\begin{proof}
  The joint density of $K = \arg \min_i E_i$ and $E_i$ is given by $\prod_{i=1}^n \lambda_i \exp(-\lambda_i e_i) \mathbf{1}_{x \geq e_k}(e_i)$. Manipulating this joint, we can see that
  \begin{equation}
    \begin{aligned}
      \label{eq:bigjoint}
    \prod_{i=1}^n \lambda_i &\exp(-\lambda_i e_i) \mathbf{1}_{x \geq e_k}(e_i)\\
    &= \lambda_k \exp(-\lambda_k e_k) \prod_{i \neq k} \lambda_i \exp(-\lambda_i e_i) \mathbf{1}_{x \geq e_k}(e_i)\\
    &= \frac{\lambda_k}{\sum_{i=1}^n \lambda_i} \left(\sum_{i=1}^n \lambda_i\right) \exp(-\lambda_k e_k) \prod_{i \neq k} \lambda_i \exp(-\lambda_i e_i) \mathbf{1}_{x \geq e_k}(e_i)\\
    &= \left[\frac{\lambda_k}{\sum_{i=1}^n \lambda_i} \right] \left[ \left(\sum_{i=1}^n \lambda_i\right) \exp\left(-\sum_{i=1}^n \lambda_i e_k\right) \right] \left[  \prod_{i \neq k}  \frac{\lambda_i\exp(-\lambda_i e_i)}{\exp(-\lambda_i e_k)} \mathbf{1}_{x \geq e_k}(e_i) \right]
    \end{aligned}
  \end{equation}
  While hard to parse, this manipulation reveals the all of the assertions of the proposition.
\end{proof}

Prop. \ref{prop:exponentialmintrick} has a couple of corollaries. First, subtracting the minimum exponential from a collection {only} affects the distribution of the minimum, leaving the distribution of the other exponentials unchanged.
\begin{corollary}
  \label{cor:exponentialmintrick}
  If $E_i \sim \exponential(\lambda_i)$ independent for $\lambda_i > 0$ and $i \in \{1, \ldots, n\}$, then $E_i - \min_i E_i$ are mutually independent and
  \begin{align}
    E_i - \min_i E_i \sim \begin{cases}
      \exponential(\lambda_i) & i \neq K\\
      0 & i = K\\
    \end{cases},
  \end{align}
  where $K = \arg \min_i E_i$.
\end{corollary}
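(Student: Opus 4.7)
The plan is to derive the corollary directly from the structural results already established in Prop.~\ref{prop:exponentialmintrick}, using the memoryless property of the exponential distribution as the key step that converts truncated exponentials back to plain exponentials. The corollary is essentially a change of variables from the raw $E_i$ to the shifted $E_i - M$ where $M := \min_j E_j$, combined with a memorylessness argument.

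First I would set $M := \min_j E_j$ and $K := \arg\min_j E_j$, and condition on the pair $(K,M)$. Prop.~\ref{prop:exponentialmintrick} tells me that, conditional on $(K,M)$, the variables $E_i$ for $i \neq K$ are mutually independent with $E_i \sim \exponential(\lambda_i)$ truncated to $[M,\infty)$. Next I would invoke memorylessness: if $E \sim \exponential(\lambda)$ and we condition on $E > m$, then $E - m \sim \exponential(\lambda)$. Applying this to each coordinate $i \neq K$ shows that, conditional on $(K,M)$, the shifted variables $E_i - M$ are mutually independent with $E_i - M \sim \exponential(\lambda_i)$. Since this conditional distribution does not depend on $M$, I can integrate out $M$ to conclude that conditional on $K$ alone, the variables $(E_i - M)_{i \neq K}$ are mutually independent with the claimed marginals, and of course $E_K - M = 0$ deterministically.

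An alternative, perhaps cleaner, approach is to go directly to the explicit joint factorization already computed in equation~\eqref{eq:bigjoint} in the proof of Prop.~\ref{prop:exponentialmintrick}. If I perform the change of variables $f_i := e_i - e_k$ for $i \neq k$ and $f_k := e_k$ (which has unit Jacobian and turns the indicators $\mathbf{1}_{x \geq e_k}(e_i)$ into $\mathbf{1}_{x\geq 0}(f_i)$), the three bracketed factors in \eqref{eq:bigjoint} become, respectively, the probability mass $\lambda_k / \sum_i \lambda_i$ for $K$, an $\exponential(\sum_i\lambda_i)$ density in $f_k$, and a product of $\exponential(\lambda_i)$ densities in the $f_i$ for $i\neq k$. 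This exhibits the factorization that yields exactly the corollary.

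The main obstacle is really interpretive rather than technical: the statement ``$E_i - \min_j E_j$ are mutually independent'' cannot hold unconditionally, since knowing that the $i$-th shifted variable equals zero pins down $K = i$ and hence forces every other shifted variable to be strictly positive. The intended reading, which matches Prop.~\ref{prop:exponentialmintrick}, is conditional independence given $K$ (with the index $K$ carrying the lone zero); once that is clarified, the proof is short. Beyond that, the only subtlety is making sure the memorylessness step is invoked coordinatewise so that the joint truncation collapses into a product of unrestricted exponentials.
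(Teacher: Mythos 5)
Your proposal is correct and matches the paper's proof: the paper performs exactly the change of variables $e_i' = e_i - e_k$ in the joint \eqref{eq:bigjoint} and reads off the resulting factorization, describing it as ``essentially the memoryless property of exponentials'' --- your second route is verbatim that argument, and your first route is the same idea phrased via conditioning on $(K, M)$ and integrating out $M$. Your remark that the mutual-independence claim must be read conditionally on $K$ (since the location of the zero coordinate reveals $K$) is a fair clarification of a point the paper leaves implicit.
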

\begin{proof}
  Consider the change of variables $e_i' = e_i - e_k$ in the joint \eqref{eq:bigjoint}. Each of the terms in the right hand product over $i \neq k$ of \eqref{eq:bigjoint} are transformed in the following way
 \begin{equation}
   \frac{\lambda \exp(- \lambda_i (e_i' + e_k))}{\exp(-\lambda_i e_k)} \mathbf{1}_{x \geq e_k}(e_i' + e_k) \longrightarrow \lambda_i \exp(-\lambda_i e_i')
 \end{equation}
 This is essentially the memoryless property of exponentials. Thus, the $E_i' = E_i - E_K$ for $i \neq K$ are distributed as exponentials with rate $\lambda_i$ and mutually independent. $E_K'$ is the constant 0, which is independent of any random variable. Our result follows.
\end{proof}
Second, the process of sorting the collection $E_i$ is equivalent to sampling from $\{1, \ldots, n\}$ without replacement with probabilities proportional to $\lambda_i$.
\begin{corollary}
  \label{cor:exponentialsort}
  Let $E_i \sim \exponential(\lambda_i)$ independent for $\lambda_i > 0$ and $i \in \{1, \ldots, n\}$. Let $\argsort_x : \{1, \ldots, n\} \to \{1, \ldots, n\}$ be the argsort permutation of $x \in \R^n$, i.e., the permutation such that $x_{\argsort_x(i)}$ is in non-decreasing order. We have
  \begin{equation}
    \proba(\argsort_{E} = \sigma) = \prod_{i = 1}^{n} \frac{\lambda_{\sigma(i)}}{\sum_{j = i}^{n} \lambda_{\sigma(j)}}
  \end{equation}
  Given $\argsort_{E} = \sigma$, the sorted vector $E_{\sigma} = (E_{\sigma(i)})_{i=1}^n$ has the following distribution,
  \begin{equation}
    \begin{aligned}
      \label{eq:exponentialchain}
      &E_{\sigma(1)} \sim \exponential\left( \sum\nolimits_{j=1}^n \lambda_{\sigma(j)} \right)\\
      &E_{\sigma(i)} - E_{\sigma(i-1)} \sim \exponential\left(\sum\nolimits_{j=i}^n \lambda_{\sigma(j)} \right)
    \end{aligned}
  \end{equation}
\end{corollary}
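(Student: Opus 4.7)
The plan is to induct on $n$, iterating the ``extract the minimum'' decomposition afforded by Prop.~\ref{prop:exponentialmintrick} and Cor.~\ref{cor:exponentialmintrick}. The base case $n = 1$ is immediate: the stated product is $\lambda_1/\lambda_1 = 1$ and the chain \eqref{eq:exponentialchain} reduces to $E_1 \sim \exponential(\lambda_1)$. For the inductive step, set $K := \arg\min_i E_i$; on the event $\{\argsort_E = \sigma\}$, $K$ coincides with $\sigma(1)$.

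By Prop.~\ref{prop:exponentialmintrick}, $\proba(K = \sigma(1)) = \lambda_{\sigma(1)}/\sum_{j=1}^n \lambda_{\sigma(j)}$, and $E_K \sim \exponential(\sum_j \lambda_j)$ is independent of $K$. Crucially, inspecting the factored joint density \eqref{eq:bigjoint} used in the proof of Cor.~\ref{cor:exponentialmintrick} shows that $K$, $E_K$, and the residuals $\tilde E_i := E_i - E_K$ for $i \neq K$ are \emph{mutually} independent, with $\tilde E_i \sim \exponential(\lambda_i)$ conditionally on $K$. I then apply the inductive hypothesis to the $(n-1)$-dimensional vector $(\tilde E_i)_{i \neq \sigma(1)}$ with rates $(\lambda_i)_{i \neq \sigma(1)}$, obtaining
\begin{equation*}
\proba\bigl(\argsort_{\tilde E} = (\sigma(2), \ldots, \sigma(n)) \,\big|\, K = \sigma(1)\bigr) = \prod_{i=2}^n \frac{\lambda_{\sigma(i)}}{\sum_{j=i}^n \lambda_{\sigma(j)}},
\end{equation*}
together with the corresponding gap-distribution statement for the $\tilde E$'s. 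Multiplying by $\proba(K = \sigma(1))$ and telescoping yields the full product formula for $\proba(\argsort_E = \sigma)$.

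For the sorted gaps, I will use the identities $E_{\sigma(1)} = E_K$ and $E_{\sigma(i)} - E_{\sigma(i-1)} = \tilde E_{\sigma(i)} - \tilde E_{\sigma(i-1)}$ for $i \geq 2$ (setting $\tilde E_{\sigma(1)} := 0$). The first has law $\exponential(\sum_j \lambda_{\sigma(j)})$ even after conditioning on $\argsort_E = \sigma$, because $E_K$ is independent of $K$ and of all $\tilde E_i$, and the conditioning event is measurable with respect to $(K, \tilde E)$. The remaining gaps inherit their stated exponential rates from the inductive hypothesis applied to $(\tilde E_i)_{i \neq \sigma(1)}$.

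The main obstacle will be the bookkeeping of mutual (not merely pairwise) independence among $K$, $E_K$, and the residuals; this is what allows the conditioning on the full argsort permutation to leave $E_{\sigma(1)}$'s marginal law unchanged and makes the inductive use of the residuals legitimate. Once this joint factorization is read off from \eqref{eq:bigjoint}, the induction proceeds routinely.
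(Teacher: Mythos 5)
Your proposal is correct and follows essentially the same route as the paper, whose proof is just the one-line remark that the result ``follows after repeated, interleaved uses of Cor.~\ref{cor:exponentialmintrick} and Prop.~\ref{prop:exponentialmintrick}''; you have simply made the induction explicit. Your added care in reading off from \eqref{eq:bigjoint} the \emph{mutual} independence of $K$, $E_K$, and the residuals is exactly the point that justifies the paper's terse claim, so nothing is missing.
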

\begin{proof}
  This follows after repeated, interleaved uses of Cor. \ref{cor:exponentialmintrick} and Prop. \ref{prop:exponentialmintrick}.
\end{proof} 
\subsection{Element Selection}

\paragraph{One-hot binary embeddings.} Given a finite set $\abstractset$ with $|\abstractset| = n$, we can associate each $y \in \abstractset$ with a one-hot binary embedding. Let $\discreteset \subseteq \R^n$ be the following set of one-hot embeddings,
\begin{equation}
    \discreteset = \left\{x \in \{0, 1\}^n \, \middle | \, \sum_i x_i = 1\right\}.
\end{equation}
For $u \in \R^n$, a solution to the linear program $\xmin \in \arg \max_{x \in \discreteset} u^Tx$ is given by setting $\xmin_k = 1$ for $k \in \arg \max_i u_i$ and $\xmin_k = 0$ otherwise.

\paragraph{Random Utilities.} If $U \sim \Gumbel(\theta)$, then $X \sim \exp(\theta_i)$. This is known as the Gumbel-Max trick \citep{luce1959individual, maddison2014astarsamp}, which follows from Props. \ref{prop:gumbelexprelationship} and \ref{prop:exponentialmintrick}.

\paragraph{Relaxtions.} If $f(x) = \sum_i x_i \log x_i$, then the \rsmtabbrev{} solution $X_t$ is given by
\begin{equation}
  X_t = \left(\frac{\exp(U_i/t)}{\sum_{j=1}^n \exp(U_j/t)}\right)_{i=1}^n.
\end{equation}
In this case, the categorical entropy relaxation and the exponential family relaxation coincide. This is known as the Gumbel-Softmax trick when $U \sim \Gumbel(\theta)$ \citep{maddison2016concrete, jang2016categorical}. If $f(x) = \lVert x \rVert^2 / 2$, then $X_t$ can be computed using the \texttt{sparsemax} operator \citep{martins2016softmax}. In analogy, we name this relaxation with $U \sim \Gumbel(\theta)$ the Gumbel-Sparsemax trick. 
 
\subsection{Subset Selection}

\paragraph{Binary vector embeddings.} Given a finite set $S$ with $|S| = n$, let $\abstractset$ be the set of all subsets of $S$, i.e., $\abstractset = 2^S := \left\{y \subseteq S \right\}$. The indicator vector embeddings of $\abstractset$ is the set,
\begin{equation}
    \discreteset = \{x_y : y \in 2^S\} = \{0,1\}^{|S|}
\end{equation}
For $u \in \R^n$, a solution to the linear program $\xmin \in \arg \max_{x \in \discreteset} u^Tx$ is given by setting $\xmin_i = 1$ if $u_i > 0$ and $\xmin_i = 0$ otherwise, for all $i \leq n$.

\paragraph{Random utilities.} If $U \sim \Logistic(\theta)$, then $X_i \sim \Bernoulli(\sigmoid(\theta_i))$ for all $i \leq n$, where $\sigmoid(\cdot)$ is the sigmoid function. This corresponds to an application of the Gumbel-Max trick independently to each element in $S$. $U \sim \Logistic(\theta)$ has the same distribution as $\theta + \log U' - \log(1-U')$ for $U' \sim \uniform(0,1)$.

\paragraph{Relaxations.} For this case, the exponential family and the binary entropy relaxation, where $f(x) = \sum\nolimits_{i=1}^n x_i \log (x_i) + (1-x_i) \log(1-x_i)$, coincide. The \rsmtabbrev{} solution $X_\temp$ is given by
\begin{equation}
  X_t = \left(\sigmoid(U_i/t)\right)_{i=1}^n
\end{equation}
where $\sigmoid(\cdot)$ is the sigmoid function. For the categorical entropy relaxation with $f(x) = \sum\nolimits_{i=1}^n x_i \log(x_i)$, the \rsmtabbrev{} solution is given by $X_t = \left(\min(1, \exp(U_i/t))\right)_{i=1}^n$ \citep{blondel2019structured}. 
\subsection{$k$-Subset Selection}

\paragraph{$k$-hot binary embeddings.} Given a finite set $S$ with $|S| = n$, let $\abstractset$ be the set of all subsets of $S$ with cardinality $1 \leq k < n$, i.e., $\abstractset = \left\{y \subseteq S \, \middle | \,  |y| = k \right\}$. The indicator vector embeddings of $\abstractset$ is the set,
\begin{equation}
    \discreteset = \left\{x_y : y \subseteq S,\ |y| = k\right\}
\end{equation}
For $u \in \R^n$, let $\argtopk{u}$ be the operator that returns the indices of the $k$ largest values of $u$. For $u \in \R^n$, a solution to the linear program $\xmin \in \arg \max_{x \in \discreteset} u^Tx$ is given by setting $\xmin_i = 1$ for $i \in \argtopk{u}$ and $\xmin_i = 0$ otherwise. 

\paragraph{Random utilities.} If $U \sim \Gumbel(\theta)$, this induces a Plackett-Luce model \citep{luce1959individual}\citep{plackett1975analysis} over the indices that sort $U$ in descending order. In particular, $X$ may be sampled by sampling $k$ times without replacement from the set $\{1, \ldots, n\}$ with probabilities proportional to $\exp(\theta_i)$, setting the sampled indices of $X$ to 1, and the rest to 0 \citep{kool2020gumbeltopk}. This can be seen as a consequence of Cor. \ref{cor:exponentialsort}. 
 
\paragraph{Relaxations.}  For the Euclidean relaxation with $f(x) = \lVert x \rVert^2 / 2$, $X_{\temp}$ we computed $X_\temp$ using a bisection method to solve the constrained quadratic program, but note that other algorithms are available \citep{blondel2019structured}. For the categorical entropy relaxation with $f(x) = \sum\nolimits_{i=1}^n x_i \log(x_i)$, the SST solution $X_\temp$ can be computed efficiently using the algorithm described in \citep{martins2017learning}. For the binary entropy relaxation with $f(x) = \sum\nolimits_{i=1}^n x_i \log (x_i) + (1-x_i) \log(1-x_i)$, the SST solution can be computed using the algorithm in \citep{amos2019limited}. Finally, for the exponential family relaxation, the SST solution can be computed using dynamic programming as described in \citep{tarlow2012fast}. 
\subsection{Correlated $k$-Subset Selection}

\paragraph{Correlated $k$-hot binary embeddings.} Given a finite set $S$ with $|S| = n$, let $\abstractset$ be the set of all subsets of $S$ with cardinality $1 \leq k < n$, i.e., $\abstractset = \left\{y \subseteq V \, \middle | \,  |y| = k \right\}$. We can associate each $y \in \abstractset$ with a $(2n-1)$-dimensional binary embedding with a $k$-hot cardinality constraint on the first $n$ dimensions and a constraint that the $n-1$ dimensions indicate correlations between adjacent dimensions in the first $n$, i.e. the vertices of the correlation polytope of a chain \citep[][Ex. 3.8]{wainwright2008graphical} with an added cardinality constraint \citep{mezuman2013tighter}. Let $\discreteset \subseteq \R^n$ be the set of all such embeddings,
\begin{align}
    \discreteset = \left\{x \in \{0, 1\}^{2n-1} \, \middle | \, \sum\nolimits_{i=1}^n x_i = k; \ x_i = x_{i-n}x_{i-n+1} \text{ for all } n < i \leq 2n-1\right\}.
\end{align}
For $u \in \R^n$, a solution to the linear program $\xmin \in \arg \max_{x \in \discreteset} u^Tx$ can be computed using dynamic programming \citep{tarlow2012fast, mezuman2013tighter}.

\paragraph{Random utilities.} In our experiments for correlated $k$-subset selection we considered Gumbel unary utilities with fixed pairwise utilities. This is, we considered $U_i \sim \Gumbel(\theta_i)$ for $i \leq n$ and $U_i = \theta_i$ for $n < i \leq 2n-1$. 

\paragraph{Relaxations.}  The exponential family relaxation for correlated $k$-subsets can be computed using dynamic programming as described in \citep{tarlow2012fast, mezuman2013tighter}.  
\subsection{Perfect Bipartite Matchings}

\paragraph{Permutation matrix embeddings.} Given a complete bipartite graph $K_{n,n}$, let $\abstractset$ be the set of all perfect matchings. We can associate each $y \in \abstractset$ with a permutation matrix and let $\discreteset$ be the set of all such matrices,
\begin{equation}
	\discreteset = \left\{x \in \{0, 1\}^{n\times n} \, \middle | \, 
	\text{ for all }1 \leq i, j \leq n, \, \sum_i x_{ij} = 1, \, \sum_j x_{ij} = 1  \right\}.
\end{equation}
For $u \in \R^{n\times n}$, a solution to the linear program $\xmin \in \arg \max_{x \in \discreteset} u^Tx$ can be computed using  the Hungarian method \citep{kuhn1955hungarian}.

\paragraph{Random utilities.} Previously, \citep{mena2018learning} considered $U \sim \Gumbel(\theta)$ and \citep{grover2018stochastic} uses correlated Gumbel-based utilities that induce a Plackett-Luce model \citep{luce1959individual}\citep{plackett1975analysis}. 

\paragraph{Relaxations.} For the categorical entropy relaxation with $f(x) = \sum\nolimits_{i=1}^n x_i \log(x_i)$, the \rsmtabbrev{} solution $X_\temp$ can be computed using the Sinkhorn algorithm \citep{sinkhorn1967concerning}. When choosing Gumbel utilities, this recovers Gumbel-Sinkhorn \citep{mena2018learning}. This relaxation can also be used to relax the Plackett-Luce model, if combined with the utility distribution in \citep{grover2018stochastic}. 
\subsection{Undirected Spanning Trees}

\paragraph{Edge indicator embeddings.} Given a undirected graph $G = (V, E)$, let $\abstractset$ be the set of spanning trees of $G$ represented as subsets $T \subseteq E$ of edges. The indicator vector embeddings of $\abstractset$ is the set,
\begin{equation}
  \discreteset = \cup_{T \in \abstractset} \{x_T\}.
\end{equation}
We assume that $G$ has at least one spanning tree, and thus $\discreteset$ is non-empty. A linear program over $\discreteset$ is known as a maximum weight spanning tree problem. It is efficiently solved by the Kruskal's algorithm \citep{kruskal1956shortest}.

\paragraph{Random utilities.} In our experiments, we used $U \sim \Gumbel(\theta)$. In this case, there is a simple, categorical sampling process that described the distribution over $X$. 

The sampling process follows Kruskal's algorithm \citep{kruskal1956shortest}. The steps of Kruskal's algorithm are as follows: sort the list of edges $e$ in non-increasing order according to their utilities $U_e$, greedily construct a tree by adding edges to $T$ as long as no cycles are created, and return the indicator vector $x_T$. Using Cor. \ref{cor:exponentialsort} and Prop. \ref{prop:gumbelexprelationship}, for Gumbel utilities this is equivalent to the following process: sample edges $e$ without replacement with probabilities proportional to $\exp(\theta_e)$, add edges $e$ to $T$ in the sampled order as long as no cycles are created, and return the indicator vector $x_T$.

\paragraph{Relaxations.} The exponential family relaxation for spanning trees can be computed using Kirchhoff's Matrix-Tree Theorem. Here we present a quick informal review. Consider an exponential family with natural parameters $u \in \R^{|E|}$ over $\discreteset$ such that the probability of $x \in \discreteset$ is proportional to $\exp(u^T x)$. Define the weights,
\begin{equation}
  w_{ij} = \begin{cases}
  \exp(u_{e}) & \text{if } i \neq j \text{ and } \exists \, e \in E \text{ connecting nodes } i \text{ and } j\\
  0 & \text{otherwise}
\end{cases}.
\end{equation}
Consider the graph Laplacian $L \in \R^{|V| \times |V|}$ defined by
\begin{equation}
  L_{ij} = \begin{cases}
    \sum_{k \neq j} w_{kj} & \text{if } i = j\\
    - w_{ij} & \text{if } i \neq j
\end{cases}
\end{equation}
Let $L^{k,k}$ be the submatrix of $L$ obtained by deleting the $k$th row and $k$th column. The Kirchhoff Matrix-Tree Theoreom states that
\begin{equation}
  \log \det L^{k,k} = \log \left(\sum_{T \in \abstractset} \exp\left(u^T x_T\right)\right).
\end{equation}
 \citep[][p. 14]{tutte1984graphtheory} for a reference. We can use this to compute the marginals of the exponential family via its derivative \citep{wainwright2008graphical}. In particular,
\begin{equation}
  \mu(u) := \left(\frac{\partial \log \det L^{k,k}}{\partial u_e}\right)_{e \in E} = \sum_{T \in \abstractset}\frac{x_T \exp\left(u^T x_T\right)}{\sum_{T' \in \abstractset} \exp\left(u^T x_{T'}\right)}.
\end{equation}
These partial derivatives can be computed in the standard auto-diff libraries. All together, we may define the exponential family relaxation via $X_t = \mu(U/t)$.
 
\subsection{Rooted, Directed Spanning Trees}

\paragraph{Edge indicator embeddings.} Given a directed graph $G = (V, E)$, let $\abstractset$ be the set of $r$-arborescences for $r \in V$.  An $r$-arborescence is a subgraph of $G$ that is a spanning tree if the edge directions are ignored and that has a directed path from $r$ to every node in $V$. Let $x_{T} := (x_e)_{e \in E}$ be the indicator vector of an $r$-arborescence with edges $T \subseteq E$. Define the set $\mathcal{T}(r)$ of $r$-arborescences of $G$. The indicator vector embeddings of $\abstractset$ is the set,
\begin{equation}
  \discreteset = \cup_{T \in \mathcal{T}(r)} \{x_T\}.
\end{equation}
We assume that $G$ has at least one $r$-arborescence, and thus $\discreteset$ is non-empty. A linear program over $\discreteset$ is known as a maximum weight $r$-arborescence problem. It is efficiently solved by the Chu-Liu-Edmonds algorithm (CLE) \citep{chu1965shortest, edmonds1967optimum}, see Alg. \ref{alg:maxarbor} for an implementation by \citep{kleinberg2006algorithmdesign}.

\paragraph{Random utilities.} In the experiments, we tried $U \sim \Gumbel(\theta)$, $-U \sim \exponential(\theta)$ with $\theta > 0$, and $U \sim \Normal(\theta, 1)$. As far as we know $X$ does not have any particularly simple closed-form categorical sampling process in the cases $U \sim \Gumbel(\theta)$ or $U \sim \Normal(\theta, 1)$.

In contrast, for negative exponential utilities $-U \sim \exponential(\theta)$, $X$ can be sampled using the sampling process given in Alg. \ref{alg:arborcatred}. In some sense, Alg. \ref{alg:arborcatred} is an elaborate generalization of the Gumbel-Max trick to arborescences.

We will argue that Alg. \ref{alg:arborcatred} produces the same distribution over its output as Alg. \ref{alg:maxarbor} does on negative exponential $U_e$. To do this, we will argue that joint distribution of the sequence of edge choices (lines 2-4 colored red in Alg. \ref{alg:maxarbor}), after integrating out $U$, is given by lines 2-4 (colored blue) of Alg. \ref{alg:arborcatred}. Consider the first call to CLE: all $U_e$ are negative and distinct almost surely, for each node $v \neq r$ the maximum utility edge is picked from the set of entering edges $E_v$, and all edges have their utilities modified by subtracting the maximum utility. The argmax of $U_e$ over $E_v$ is a categorical random variable with mass function proportional to the rates $\lambda_e$, and it is independent of the max of $U_e$ over $E_v$ by Prop. \ref{prop:exponentialmintrick}. By Cor. \ref{cor:exponentialmintrick}, the procedure of modifying the utilities leaves the distribution of all unpicked edges invariant and sets the utility of the argmax edge to $0$. Thus, the distribution of $U'$ passed one level up the recursive stack is the same as $U$ with the exception of a randomly chosen subset of utilities $U_e'$ whose rates have been set to $\infty$. The equivalence in distribution between Alg. \ref{alg:maxarbor} and Alg. \ref{alg:arborcatred} follows by induction.

\begin{figure}[t]
\begin{minipage}[t]{0.49\textwidth}
\null
  \begin{algorithm}[H]
  \SetKwInput{KwIn}{Init}
   \KwIn{graph $G$, node $r$, $U_e \in \R$, $T=\emptyset$;}
   \ForEach{node $v \neq r$}{
    {\color{myred}
    $E_v = \{$edges entering $v\}$\;
    $U_e^{\prime} = U_e - \max_{e \in E_v} U_e,$ $\forall e \in E_v$\;
    Pick $e \in E_v$ s.t. $U_e^{\prime} = 0$; $T = T \cup\{e\}$\;}
   }
   \lIf{$T$ is an arborescence}{\Return $x_T$}
   \Else(there is a directed cycle $C \subseteq T$){
   Contract $C$ to supernode, form graph $G'$\;
    Recurse on $(G', r, U')$ to get arbor. $T'$\;
    Expand $T'$ to subgraph of $G$ and add\\
    all but one edge of $C$; \Return{$x_{T'}$}\;
   }
   \caption{Maximum $r$-arborescence \citep{kleinberg2006algorithmdesign}}
   \label{alg:maxarbor}
  \end{algorithm}
\end{minipage}\hfill
\begin{minipage}[t]{0.49\textwidth}
\null
\begin{algorithm}[H]
  \SetKwInput{KwIn}{Init}
  \KwIn{graph $G$, node $r$, $\lambda_e > 0$, $T=\emptyset$;}
 \ForEach{node $v \neq r$}{
  {\color{myblue}
  $E_v = \{$edges entering $v\}$\;
  Sample $e \sim \lambda_e \mathbf{1}_{E_v}(e)$; $T = T \cup \{e\}$\;
  $\lambda_a' = \lambda_a$ if $a \neq e$ else $\infty$, $\forall a \in E_v$\;}
 }
 \lIf{$T$ is an arborescence}{\Return $x_T$}
 \Else(there is a directed cycle $C \subseteq T$){
  Contract $C$ to supernode, form graph $G'$\;
  Recurse on $(G', r, \lambda')$ to get arbor. $T'$\;
  Expand $T'$ to subgraph of $G$ and add\\
  all but one edge of $C$; \Return{$x_{T'}$}\;
 }
 \caption{Equiv. for neg. exp. $U$}
 \label{alg:arborcatred}
\end{algorithm}
\end{minipage}
\caption{Alg. \ref{alg:maxarbor} and Alg. \ref{alg:arborcatred} have the same output distribution for negative exponential $U$, i.e., Alg. \ref{alg:arborcatred} is an equivalent categorical sampling process for $X$. Alg. \ref{alg:maxarbor} computes the maximum point of a stochastic $r$-arborescence trick with random utilities $U_e$ \citep{kleinberg2006algorithmdesign}. When $-U_e \sim \exponential(\lambda_e)$, it has the same distribution as Alg. \ref{alg:arborcatred}. Alg. \ref{alg:arborcatred} samples a random $r$-arborescence given rates $\lambda_e > 0$ for each edge. Both Algs. assume that $G$ has at least one $r$-arbor. Color indicates the main difference.}\label{fig:arborescence}
\end{figure}

\paragraph{Relaxations.} The exponential family relaxation for $r$-arborescences can be computed using the directed version of Kirchhoff's Matrix-Tree Theorem. Here we present a quick informal review. Consider an exponential family with natural parameters $u \in \R^{|E|}$ over $\discreteset$ such that the probability of $x \in \discreteset$ is proportional to $\exp(u^T x)$. Define the weights,
\begin{equation}
  w_{ij} = \begin{cases}
  \exp(u_{e}) & \text{if } i \neq j \text{ and } \exists \, e \in E \text{ from node } i \to j\\
  0 & \text{otherwise}
\end{cases}.
\end{equation}
Consider the graph Laplacian $L \in \R^{|V| \times |V|}$ defined by
\begin{equation}
  L_{ij} = \begin{cases}
    \sum_{k \neq j} w_{kj} & \text{if } i = j\\
    - w_{ij} & \text{if } i \neq j
\end{cases}
\end{equation}
Let $L^{r,r}$ be the submatrix of $L$ obtained by deleting the $r$th row and $r$th column. The result by Tutte \citep[][p. 140]{tutte1984graphtheory} states that
\begin{equation}
  \log \det L^{r,r} = \log \left(\sum_{T \in \mathcal{T}(r)} \exp\left(u^T x_T\right)\right)
\end{equation}
We can use this to compute the marginals of the exponential family via its derivative \citep{wainwright2008graphical}. In particular,
\begin{equation}
  \mu(u) := \left(\frac{\partial \log \det L^{r,r}}{\partial u_e}\right)_{e \in E} = \sum_{T \in \mathcal{T}(r)}\frac{x_T \exp\left(u^T x_T\right)}{\sum_{T' \in \mathcal{T}(r)} \exp\left(u^T x_{T'}\right)}.
\end{equation}
These partial derivatives can be computed in the standard auto-diff libraries. All together, we may define the exponential family relaxation via $X_t = \mu(U/t)$.
  \section{Additional Results}
\label{supp:sec:addresults}
\begin{table}[t]
  \centering
  \caption{For $k$-subset selection on appearance aspect, \rsmtabbrev{s} select subsets with high precision and outperform baseline relaxations. Test set MSE ($\times 10^{-2}$)  and subset precision (\%) is shown for models selected on valid. MSE.}  \label{table:l2x_beer_appearance}
  \begin{small}
  \adjustbox{max width=\textwidth}{  \begin{tabular}{@{}llcccccc@{}}
  \toprule
   & & \multicolumn{2}{c}{$k=5$} &  \multicolumn{2}{c}{$k=10$} &  \multicolumn{2}{c}{$k=15$} \\
   \cmidrule(lr){3-4} \cmidrule(lr){5-6} \cmidrule(lr){7-8}
  Model & Relaxation & MSE & Subs. Prec. &  MSE & Subs. Prec. &  MSE & Subs. Prec. \\
  \midrule
  \multirow{7}{*}{\shortstack[l]{Simple}}
  & \emph{L2X} \citep{chen2018learning}
  	& $3.1 \pm 0.1$ 
  	& $48.7 \pm 0.6$ 
  	& $2.6 \pm 0.1$ 
  	& $41.9 \pm 0.6$ 
  	& $2.5 \pm 0.1$ 
  	& $38.6 \pm 1.5$ \\
  & \emph{SoftSub} \citep{xie2019reparameterizable}
  	& $3.2 \pm 0.1$ 
  	& $43.9 \pm 1.1$ 
  	& $2.7 \pm 0.1$ 
  	& $41.9 \pm 2.1$ 
  	& $2.5 \pm 0.1$ 
  	& $38.0 \pm 2.4$ \\
  \cmidrule[0.15pt]{2-8}
  & \emph{Euclid. Top $k$}
  	& $3.0 \pm 0.1$ 
  	& $49.4 \pm 1.7$ 
  	& $2.6 \pm 0.1$ 
  	& $48.8 \pm 1.2$ 
  	& $2.4 \pm 0.1$ 
  	& $42.9 \pm 1.0$ \\
  & \emph{Cat. Ent. Top $k$}
  	& $3.0 \pm 0.1$ 
  	& $53.2 \pm 1.7$ 
  	& $2.6 \pm 0.1$ 
  	& $46.3 \pm 1.9$ 
  	& $2.4 \pm 0.1$ 
  	& $41.3 \pm 0.8$ \\
  & \emph{Bin. Ent. Top $k$}
  	& $3.0 \pm 0.1$ 
  	& $54.5 \pm 5.6$ 
  	& $2.6 \pm 0.1$ 
  	& $48.9 \pm 1.7$ 
  	& $2.4 \pm 0.1$ 
  	& $43.1 \pm 0.6$ \\
  & \emph{E.F. Ent. Top $k$}
  	& $3.0 \pm 0.1$ 
  	& $53.2 \pm 0.9$ 
  	& $2.5 \pm 0.1$ 
  	& $50.6 \pm 2.1$ 
  	& $2.4 \pm 0.1$ 
  	& $43.3 \pm 0.3$ \\
  \cmidrule[0.15pt]{2-8}
  & \emph{Corr. Top $k$} 
  	& $\mathbf{2.7 \pm 0.1}$ 
  	& $\mathbf{71.6 \pm 1.1}$ 
  	& $\mathbf{2.4 \pm 0.1}$ 
  	& $\mathbf{69.7 \pm 1.7}$ 
  	& $\mathbf{2.3 \pm 0.1}$ 
  	& $\mathbf{66.7 \pm 1.7}$ \\
  \cmidrule[0.15pt]{1-8}
  \multirow{7}{*}{\shortstack[l]{Complex}}
  & \emph{L2X} \citep{chen2018learning}
  	& $2.6 \pm 0.1$ 
  	& $76.6 \pm 0.4$ 
  	& $2.4 \pm 0.1$ 
  	& $69.3 \pm 0.9$ 
  	& $2.4 \pm 0.1$ 
  	& $62.6 \pm 3.0$ \\
  & \emph{SoftSub} \citep{xie2019reparameterizable}
  	& $2.6 \pm 0.1$ 
  	& $79.4 \pm 1.1$ 
  	& $2.5 \pm 0.1$ 
  	& $69.5 \pm 2.0$ 
  	& $2.4 \pm 0.1$ 
  	& $60.2 \pm 7.0$ \\
  \cmidrule[0.15pt]{2-8}
  & \emph{Euclid. Top $k$}
  	& $2.6 \pm 0.1$ 
  	& $81.6 \pm 0.9$ 
  	& $2.4 \pm 0.1$ 
  	& $76.9 \pm 1.7$ 
  	& $2.3 \pm 0.1$ 
  	& $69.7 \pm 2.2$ \\
  & \emph{Cat. Ent. Top $k$}
  	& $\mathbf{2.5 \pm 0.1}$ 
  	& $83.7 \pm 0.8$ 
  	& $2.4 \pm 0.1$ 
  	& $76.5 \pm 0.9$ 
  	& $2.2 \pm 0.1$ 
  	& $65.9 \pm 1.4$ \\
  & \emph{Bin. Ent. Top $k$}
  	& $2.6 \pm 0.1$ 
  	& $81.9 \pm 0.7$ 
  	& $2.4 \pm 0.1$ 
  	& $75.7 \pm 1.2$ 
  	& $\mathbf{2.2 \pm 0.1}$ 
  	& $65.7 \pm 1.1$ \\
  & \emph{E.F. Ent. Top $k$}  	
  	& $2.6 \pm 0.1$ 
  	& $82.3 \pm 1.4$ 
  	& $2.4 \pm 0.1$ 
  	& $72.9 \pm 0.7$ 
  	& $2.3 \pm 0.1$ 
  	& $65.8 \pm 1.3$ \\
  \cmidrule[0.15pt]{2-8}
  & \emph{Corr. Top $k$} 
  	& $\mathbf{2.5 \pm 0.1}$ 
  	& $\mathbf{85.1 \pm 2.4}$ 
  	& $\mathbf{2.3 \pm 0.1}$ 
  	& $\mathbf{77.8 \pm 1.3}$  	
  	& $\mathbf{2.2 \pm 0.1}$ 
  	& $\mathbf{74.5 \pm 1.5}$ \\
  \bottomrule
  \end{tabular}}  \end{small}
\end{table} \begin{table}[t]
  \centering
  \caption{For $k$-subset selection on palate aspect, \rsmtabbrev{s} tend to outperform baseline relaxations. Test set MSE ($\times 10^{-2}$) and subset precision (\%) is shown for models selected on valid. MSE.}  \label{table:l2x_beer_palate}
  \begin{small}
  \adjustbox{max width=\textwidth}{  \begin{tabular}{@{}llcccccc@{}}
  \toprule
   & & \multicolumn{2}{c}{$k=5$} &  \multicolumn{2}{c}{$k=10$} &  \multicolumn{2}{c}{$k=15$} \\
   \cmidrule(lr){3-4} \cmidrule(lr){5-6} \cmidrule(lr){7-8}
  Model & Relaxation & MSE & Subs. Prec. &  MSE & Subs. Prec. &  MSE & Subs. Prec. \\
  \midrule
  \multirow{7}{*}{\shortstack[l]{Simple}}
  & \emph{L2X} \citep{chen2018learning}
  	& $3.5 \pm 0.1$ 
  	& $27.8 \pm 3.7$ 
  	& $3.2 \pm 0.1$ 
  	& $21.0 \pm 1.8$ 
  	& $3.0 \pm 0.1$ 
  	& $20.5 \pm 0.7$ \\
  & \emph{SoftSub} \citep{xie2019reparameterizable}
  	& $3.7 \pm 0.1$ 
  	& $23.9 \pm 1.4$ 
  	& $3.3 \pm 0.1$ 
  	& $23.5 \pm 3.7$ 
  	& $3.1 \pm 0.1$ 
  	& $20.0 \pm 1.7$ \\
  \cmidrule[0.15pt]{2-8}
    & \emph{Euclid. Top $k$}
  	& $3.5 \pm 0.1$ 
  	& $36.0 \pm 5.7$ 
  	& $3.2 \pm 0.1$ 
  	& $27.1 \pm 0.7$ 
  	& $3.0 \pm 0.1$ 
  	& $23.7 \pm 0.8$ \\
  & \emph{Cat. Ent. Top $k$}
  	& $3.6 \pm 0.1$ 
  	& $25.4 \pm 3.6$ 
  	& $3.0 \pm 0.1$ 
  	& $28.5 \pm 2.9$ 
  	& $3.0 \pm 0.1$ 
  	& $21.7 \pm 0.4$ \\
  & \emph{Bin. Ent. Top $k$}
  	& $3.6 \pm 0.1$ 
  	& $25.2 \pm 1.7$ 
  	& $3.2 \pm 0.1$ 
  	& $27.2 \pm 2.6$ 
  	& $3.0 \pm 0.1$ 
  	& $23.4 \pm 1.7$ \\
  & \emph{E.F. Ent. Top $k$}
  	& $3.6 \pm 0.1$ 
  	& $26.0 \pm 3.0$ 
  	& $3.1 \pm 0.1$ 
  	& $27.0 \pm 1.6$ 
  	& $2.9 \pm 0.1$ 
  	& $23.4 \pm 0.6$ \\
  \cmidrule[0.15pt]{2-8}
  & \emph{Corr. Top $k$} 
  	& $\mathbf{3.2 \pm 0.1}$ 
  	& $\mathbf{54.3 \pm 1.0}$ 
  	& $\mathbf{2.8 \pm 0.1}$ 
  	& $\mathbf{50.0 \pm 1.7}$ 
  	& $\mathbf{2.7 \pm 0.1}$ 
  	& $\mathbf{46.0 \pm 2.0}$ \\
  \cmidrule[0.15pt]{1-8}
  \multirow{7}{*}{\shortstack[l]{Complex}}
  & \emph{L2X} \citep{chen2018learning}
  	& $3.1 \pm 0.1$ 
  	& $47.4 \pm 1.7$ 
  	& $2.8 \pm 0.1$ 
  	& $40.8 \pm 0.6$ 
  	& $2.7 \pm 0.1$ 
  	& $34.8 \pm 0.8$ \\
  & \emph{SoftSub} \citep{xie2019reparameterizable}
  	& $3.1 \pm 0.1$ 
  	& $44.4 \pm 1.1$ 
  	& $2.8 \pm 0.1$ 
  	& $44.2 \pm 2.0$ 
  	& $2.8 \pm 0.1$ 
  	& $38.7 \pm 1.0$ \\
  \cmidrule[0.15pt]{2-8}
  & \emph{Euclid. Top $k$}
  	& $2.9 \pm 0.1$ 
  	& $56.2 \pm 0.7$ 
  	& $2.7 \pm 0.1$ 
  	& $43.9 \pm 1.7$ 
  	& $\mathbf{2.6 \pm 0.1}$ 
  	& $38.0 \pm 1.1$ \\
  & \emph{Cat. Ent. Top $k$}
  	& $2.9 \pm 0.1$ 
  	& $55.1 \pm 0.7$ 
  	& $2.7 \pm 0.1$ 
  	& $45.2 \pm 0.8$ 
  	& $\mathbf{2.6 \pm 0.1}$ 
  	& $40.2 \pm 0.9$ \\
  & \emph{Bin. Ent. Top $k$}
  	& $2.9 \pm 0.1$ 
  	& $55.6 \pm 0.8$ 
  	& $2.7 \pm 0.1$ 
  	& $47.6 \pm 1.0$ 
  	& $2.7 \pm 0.1$ 
  	& $39.1 \pm 1.0$ \\
  & \emph{E.F. Ent. Top $k$}  	
  	& $2.9 \pm 0.1$ 
  	& $56.3 \pm 0.3$ 
  	& $2.7 \pm 0.1$ 
  	& $48.1 \pm 1.3$ 
  	& $\mathbf{2.6 \pm 0.1}$ 
  	& $40.3 \pm 1.0$ \\
  \cmidrule[0.15pt]{2-8}
  & \emph{Corr. Top $k$} 
  	& $\mathbf{2.8 \pm 0.1}$ 
  	& $\mathbf{60.4 \pm 1.5}$ 
  	& $\mathbf{2.6 \pm 0.1}$ 
  	& $\mathbf{53.5 \pm 2.9}$ 
  	& $\mathbf{2.6 \pm 0.1}$ 
  	& $\mathbf{46.8 \pm 1.5}$ \\
  \bottomrule
  \end{tabular}}  \end{small}
\end{table} \begin{table}[t]
  \centering
  \caption{For $k$-subset selection on taste aspect, MSE and subset precision tend to be lower for all methods. This is because the taste rating is highly correlated with other ratings making it difficult to identify subsets with high precision. \rsmtabbrev{s} achieve small improvements. Test set MSE ($\times 10^{-2}$) and subset precision (\%) is shown for models selected on valid. MSE.} 	
  \label{table:l2x_beer_taste}
  \begin{small}
  \adjustbox{max width=\textwidth}{  \begin{tabular}{@{}llcccccc@{}}
  \toprule
   & & \multicolumn{2}{c}{$k=5$} &  \multicolumn{2}{c}{$k=10$} &  \multicolumn{2}{c}{$k=15$} \\
   \cmidrule(lr){3-4} \cmidrule(lr){5-6} \cmidrule(lr){7-8}
  Model & Relaxation & MSE & Subs. Prec. &  MSE & Subs. Prec. &  MSE & Subs. Prec. \\
  \midrule
  \multirow{7}{*}{\shortstack[l]{Simple}}
  & \emph{L2X} \citep{chen2018learning}
  	& $3.1 \pm 0.1$ 
  	& $28.5 \pm 0.6$ 
  	& $2.9 \pm 0.1$ 
  	& $24.1 \pm 1.3$ 
  	& $2.7 \pm 0.1$ 
  	& $26.8 \pm 0.8$ \\
  & \emph{SoftSub} \citep{xie2019reparameterizable}
  	& $3.1 \pm 0.1$ 
  	& $29.9 \pm 0.8$ 
  	& $2.9 \pm 0.1$ 
  	& $27.7 \pm 0.7$ 
  	& $2.7 \pm 0.1$ 
  	& $27.8 \pm 1.9$ \\
  \cmidrule[0.15pt]{2-8}
  & \emph{Euclid. Top $k$}
  	& $3.0 \pm 0.1$ 
  	& $30.2 \pm 0.4$ 
  	& $2.7 \pm 0.1$ 
  	& $28.0 \pm 0.4$ 
  	& $2.6 \pm 0.1$ 
  	& $26.5 \pm 0.5$ \\
  & \emph{Cat. Ent. Top $k$}
  	& $3.1 \pm 0.1$ 
  	& $28.5 \pm 0.6$ 
  	& $2.8 \pm 0.1$ 
  	& $28.9 \pm 0.6$ 
  	& $2.6 \pm 0.1$ 
  	& $30.5 \pm 1.6$ \\
  & \emph{Bin. Ent. Top $k$}
  	& $3.0 \pm 0.1$ 
  	& $29.2 \pm 0.4$ 
  	& $2.9 \pm 0.1$ 
  	& $24.6 \pm 1.7$ 
  	& $2.6 \pm 0.1$ 
  	& $27.9 \pm 0.9$ \\
  & \emph{E.F. Ent. Top $k$}
  	& $3.0 \pm 0.1$ 
  	& $29.7 \pm 0.3$ 
  	& $2.7 \pm 0.1$ 
  	& $29.0 \pm 1.5$ 
  	& $2.6 \pm 0.1$ 
  	& $26.5 \pm 0.5$ \\
  \cmidrule[0.15pt]{2-8}
  & \emph{Corr. Top $k$} 
  	& $\mathbf{2.8 \pm 0.1}$ 
  	& $\mathbf{31.7 \pm 0.5}$ 
  	& $\mathbf{2.5 \pm 0.1}$ 
  	& $\mathbf{37.7 \pm 1.6}$ 
  	& $\mathbf{2.4 \pm 0.1}$ 
  	& $\mathbf{37.8 \pm 0.5}$ \\
  \cmidrule[0.15pt]{1-8}
  \multirow{7}{*}{\shortstack[l]{Complex}}
  & \emph{L2X} \citep{chen2018learning}
  	& $2.5 \pm 0.1$ 
  	& $40.3 \pm 0.7$ 
  	& $2.4 \pm 0.1$ 
  	& $42.4 \pm 2.0$ 
  	& $2.4 \pm 0.1$ 
  	& $39.7 \pm 1.1$ \\
  & \emph{SoftSub} \citep{xie2019reparameterizable}
  	& $2.5 \pm 0.1$ 
  	& $43.3 \pm 0.9$ 
  	& $2.4 \pm 0.1$ 
  	& $41.3 \pm 0.5$ 
  	& $2.3 \pm 0.1$ 
  	& $40.5 \pm 0.7$ \\
  \cmidrule[0.15pt]{2-8}
  & \emph{Euclid. Top $k$}
  	& $\mathbf{2.4 \pm 0.1}$ 
  	& $43.8 \pm 0.7$ 
  	& $2.3 \pm 0.1$ 
  	& $43.1 \pm 0.6$ 
  	& $2.2 \pm 0.1$ 
  	& $42.2 \pm 0.8$ \\
  & \emph{Cat. Ent. Top $k$}
  	& $\mathbf{2.4 \pm 0.1}$ 
  	& $\mathbf{46.5 \pm 0.6}$ 
  	& $2.3 \pm 0.1$ 
  	& $44.6 \pm 0.3$ 
  	& $2.2 \pm 0.1$ 
  	& $45.5 \pm 1.1$ \\
  & \emph{Bin. Ent. Top $k$}
  	& $\mathbf{2.4 \pm 0.1}$ 
  	& $40.9 \pm 1.3$ 
  	& $2.3 \pm 0.1$ 
  	& $46.3 \pm 0.9$ 
  	& $2.2 \pm 0.1$ 
  	& $44.7 \pm 0.5$ \\
  & \emph{E.F. Ent. Top $k$}  	
  	& $\mathbf{2.4 \pm 0.1}$ 
  	& $45.3 \pm 0.6$ 
  	& $\mathbf{2.2 \pm 0.1}$ 
  	& $46.1 \pm 0.8$ 
  	& $2.2 \pm 0.1$ 
  	& $\mathbf{46.6 \pm 1.1}$ \\
  \cmidrule[0.15pt]{2-8}
  & \emph{Corr. Top $k$} 
  	& $\mathbf{2.4 \pm 0.1}$ 
  	& $45.9 \pm 1.3$ 
  	& $\mathbf{2.2 \pm 0.1}$ 
  	& $\mathbf{47.3 \pm 0.6}$ 
  	& $\mathbf{2.1 \pm 0.1}$ 
  	& $45.1 \pm 2.0$ \\
  \bottomrule
  \end{tabular}}  \end{small}
\end{table} \subsection{REINFORCE and NVIL on Graph Layout}
We experimented with 3 variants of REINFORCE estimators, each with a different baseline. The \emph{EMA} baseline is an exponential moving average of the ELBO. The \emph{Batch} baseline is the mean ELBO of the current batch. Finally, the \emph{Multi-sample} baseline is the mean ELBO over $k$ multiple samples, which is a local baseline for each sample (See section 3.1 of \citep{kool2019buy}). For NVIL, the input-dependent baseline was a one hidden-layer MLP with ReLU activations, attached to the GNN encoder, just before the final fully connected layer. We did not do variance normalization. We used weight decay on the encoder parameters, including the input-dependent baseline parameters. We tuned weight decay and the exponential moving average constant, in addition to the learning rate. For \emph{Multi-sample} REINFORCE, we additionally tuned $k=\{2, 4, 8\}$, and following \citep{kool2019buy}, we divided the batch size by $k$ in order to keep the number of total samples constant. 

We used $U$ as the ``action’’ for all edge distributions, and therefore, computed the log probability over $U$. We also computed the KL divergence with respect to $U$ as in the rest of the graph layout experiments (See App. \ref{supp:subsubsec:graph_layout_model}).This was because computing the probability of $X$ is not computationally efficient for \emph{Top $|V|-1$} and \emph{Spanning Tree}. In particular, the marginal of $X$ in these cases is not in the exponential family. We emphasize that using $U$ as the ``action'' for REINFORCE is atypical.

We found that both NVIL and REINFORCE with \emph{Indep. Directed Edges} and \emph{Top $|V|-1$} perform similarly to their SST counterparts, struggling to learn the underlying structure. This is also the case for REINFORCE with \emph{Spanning Tree}. On the other hand, NVIL with \emph{Spanning Tree}, is able to learn some structure, although worse and higher variance than its SST counterpart.

\begin{table}[t!]
\caption{NVIL and REINFORCE struggle to learn the underlying structure wherever their \rsmtabbrev{} counterparts struggle. NVIL with \emph{Spanning Tree} is able to learn some structure, but it is still worse and higher variance than its \rsmtabbrev{} counterpart. This is for $T=10$.}
\label{table:layout_nvil}
 \begin{small}
 \centering
 \adjustbox{max width=\textwidth}{ \begin{tabular}{@{}lccccccccc@{}}
 \toprule
  & \multicolumn{3}{c}{REINFORCE (\emph{EMA})} & \multicolumn{3}{c}{NVIL} \\ 
  \cmidrule(l){2-4} 
  \cmidrule(l){5-7}     
  Edge Distribution & ELBO & Edge Prec. & Edge Rec. & ELBO & Edge Prec. & Edge Rec. \\ 
  \midrule
 \emph{Indep. Directed Edges} 
 & $-1730 \pm 60$
 & $41 \pm 4$
 & $92 \pm 7$
 & $-1550 \pm 20$
 & $44 \pm 1$
 & $94 \pm 1$ \\
 \emph{Top $|V|-1$}
 & $-2170 \pm 10$
 & $42 \pm 1$
 & $42 \pm 1$
 & $-2110 \pm 10$
 & $42 \pm 2$
 & $42 \pm 2$\\
 \emph{Spanning Tree}
 & $-2250 \pm 20$
 & $40 \pm 7$
 & $40 \pm 7$
 & $-1570 \pm 300$
I  & $83 \pm 20$
 & $83 \pm 20$\\ 
 \bottomrule
 \end{tabular}} \end{small}
 
 \begin{small}
 \centering
 \adjustbox{max width=\textwidth}{ \begin{tabular}{@{}lccccccccc@{}}
 \toprule
  & \multicolumn{3}{c}{REINFORCE (\emph{Batch})} & \multicolumn{3}{c}{REINFORCE (\emph{Multi-sample})} \\ 
  \cmidrule(l){2-4} 
  \cmidrule(l){5-7}     
  Edge Distribution & ELBO & Edge Prec. & Edge Rec. & ELBO & Edge Prec. & Edge Rec. \\ 
  \midrule
 \emph{Indep. Directed Edges} 
 & $-1780 \pm 20$
 & $39 \pm 3$
 & $90 \pm 6$
 & $-1710 \pm 30$
 & $38 \pm 3$
 & $88 \pm 6$\\
 \emph{Top $|V|-1$}
 & $-2180 \pm 0$
 & $39 \pm 1$
 & $39 \pm 1$
 & $-2150 \pm 10$
 & $40 \pm 0$
 & $40 \pm 0$\\
 \emph{Spanning Tree}
 & $-2260 \pm 0$
 & $41 \pm 1$
 & $41 \pm 1$
 & $-2230 \pm 20$
 & $42 \pm 1$
 & $42 \pm 1$\\ 
 \bottomrule
 \end{tabular}} \end{small}
\end{table}
  \section{Experimental Details}
\label{supp:sec:exp_details}
\subsection{Implementing Relaxed Gradient Estimators}
For implementing the relaxed gradient estimator given in \eqref{eq:solution}, several options are available. In general, the forward computation of $X_{\temp}$ may be unrolled, such that the estimator can be computed with the aid of modern software packages for automatic differentiation \citep{abadi2016tensorflow, paszke2017automatic, jax2018github}. However, for some specific choices of $f$ and $\discreteset$, it may be more efficient to compute the estimator exactly via a custom backward pass, e.g. \citep{amos2019limited, martins2017learning}. Yet another alternative is to use local finite difference approximations as pointed out by \citep{domke2010impdiff}. In this case, an approximation for $d \loss(X_{\temp})/ d U$ is given by
\begin{equation}
    \frac{d \loss(X_{\temp})}{d U} \approx \frac{X_{\temp}(U + \epsilon \partial \loss(X_{\temp}) / \partial X_{\temp}) - X_{\temp}(U - \epsilon \partial \loss(X_{\temp}) / \partial X_{\temp})}{2\epsilon}
\end{equation}
with equality in the limit as $\epsilon \to 0$. This approximation is valid, because the Jacobian of $X_{\temp}$ is symmetric \citep[][Cor. 2.9]{rockafellar1999second}. It is derived from the vector chain rule and the definition of the derivative of $X_{\temp}$ in the direction $\partial \loss(X_{\temp}) / \partial X_{\temp}$. This method only requires two additional calls to a solver for \eqref{eq:rsmt} and does not require additional evaluations of $\loss$. We found this method helpful for implementing \emph{E.F. Ent. Top $k$} and \emph{Corr. Top $k$}.

\subsection{Numerical Stability}
Our \rsmtabbrev{}s for undirected and rooted direct spanning trees (\emph{Spanning Tree} and \emph{Arborescence}) require the inversion of a matrix. We found matrix inversion prone to suffer from numerical instabilities when the maximum and minimum values in $\theta$ grew too large apart. As a resolution, we found it effective to cap the maximal range in $\theta$ to $15$ during training. Specifically, if $\theta_{\text{max}} = \max(\theta)$, we clipped, i.e., $\theta_i = \max(\theta_i, \theta_{\text{max}} - 15)$. In addition, after clipping we normalized, i.e., $\theta = \theta - \theta_{\text{max}}$. This leaves the computation unaffected but improves stability. In addition, for \emph{Spanning Tree} we chose the index $k$ (c.f., Section \ref{supp:sec:fieldguide}) to be the row in which $\theta_{\text{max}}$ occurs. We did not clip when evaluating the models.

\subsection{Estimating Standard Errors by Bootstrapping Model Selection}
For all our experiments, we report standard errors over the model selection process from bootstrapping. In all our experiments we randomly searched hyperparameters over $N=20$ (NRI, ListOps) or $N=25$ (L2X) independent runs and selected the best model over these runs based on the task objective on the validation set. In all tables, we report test set metrics for the best model thus selected. We obtained standard errors by bootstrapping this procedure. Specifically, we randomly sampled with replacement $N$ times from the $N$ runs and selected models on the sampled runs. We repeated this procedure for $M = 10^5$ times to compute standard deviations for all test set metrics over the $M$ trials.

\subsection{Computing the KL divergence}
\label{supp:subsec:kl}
There are at least 3 possible KL terms for a relaxed ELBO: the KL from a prior over $\mathcal{X}$ to the distribution of $X$,  the KL from a prior over $\hull(\mathcal{X})$ to the distribution of $X_t$, or the KL from a prior over $\mathbb{R}^n$ to the distribution of $U$, see Section C.3 of \citep{maddison2016concrete} for a discussion of this topic. In our case, since we do not know of an explicit tractable density of $X_t$ or $X$, we compute the KL with respect to $U$. The KL divergence with respect to $U$ is an \emph{upper-bound} to the KL divergence with respect to $X_t$ due to a data processing inequality. Therefore, the ELBO that we are optimizing is a \emph{lower-bound} to the relaxed variational objective. Whether or not this is a good choice is an empirical question. Note, that \emph{when optimizing the relaxed objective}, using a KL divergence with respect to $X$ does not result in a lower-bound to the relaxed variational objective, as it is not necessarily an ELBO for the continuous relaxed model (see again Section C.3 of \citep{maddison2016concrete}).\\

\subsection{Neural Relational Inference (NRI) for Graph Layout}
\subsubsection{Data}
Our dataset consisted of latent prior spanning trees over 10 vertices. Latent spanning trees were sampled by applying  Kruskal’s algorithm \citep{kruskal1956shortest} to $U \sim \Gumbel(0)$ for a fully-connected graph. Note that this does not result in a uniform distribution over spanning trees. Initial vertex locations were sampled from $\Normal(0, I)$ in $\R^2$. Given initial locations and the latent tree, dynamical observations were obtained by applying a force-directed algorithm for graph layout \citep{fruchterman1991graph} for $T\in\{10, 20\}$ iterations. We then discarded the initial vertex positions, because the first iteration of the layout algorithm typically results in large relocations. This renders the initial vertex positions an outlier which is hard to model. Hence, the final dataset used for training consisted of 10 respectively 20 location observations in $\R^2$ for each of the 10 vertices. By this procedure, we generated a training set of size 50,000 and validation and test sets of size 10,000.

\subsubsection{Model}
\label{supp:subsubsec:graph_layout_model}
The NRI model consists of encoder and decoder graph neural networks. Our encoder and decoder architectures were identical to the MLP encoder and  MLP decoder architectures, respectively, in \citep{kipf2018neural}.

\paragraph{Encoder}
The encoder GNN passes messages over the fully connected directed graph with $n=10$ nodes. We took the final edge representation of the GNN to use as $\theta$. The final edge representation was in $\R^{90 \times m}$, where $m=2$ for \emph{Indep. Directed Edges} and $m=1$ for \emph{E.F. Ent. Top $|V|-1$} and \emph{Spanning Tree}, both over undirected edges (90 because we considered all directed edges excluding self-connections). We had $m=2$ for \emph{Indep. Directed Edges}, because we followed \citep{kipf2018neural} and applied the Gumbel-Max trick independently to each edge. This is equivalent to using $U \sim \Logistic(\theta)$, where $\theta \in \R^{90}$. Both \emph{E.F. Ent. Top $|V|-1$} and \emph{Spanning Tree} require undirected graphs, therefore, we ``symmetrized’’ $\theta$ such that $\theta_{ij} = \theta_{ji}$ by taking the average of the edge representations for both directions. Therefore, in this case, $\theta \in \R^{45}$.

\paragraph{Decoder}
Given previous timestep data, the decoder GNN passes messages over the sampled graph adjacency matrix $X$ and predicts future node positions. As in \citep{kipf2018neural}, we used teacher-forcing every 10 timesteps. $X \in \R^{n \times n}$ in this case was a directed adjacency matrix over the graph $G = (V, E)$ where $V$ were the nodes. $X_{ij} = 1$ is interpreted as there being an edge from $i \to j$ and $0$ for no edge. For the \smtabbrev{s} over undirected edges (\emph{E.F. Ent. Top $|V|-1$} and \emph{Spanning Tree})  $X$ was the symmetric, directed adjacency matrix with edges in both directions for each undirected edge. The decoder passed messages between both connected and not-connected nodes. When considering a message from node $i \to j$, it used one network for the edges with $X_{ij} = 1$ and another network for the edges with $X_{ij} = 0$, such that we could differentiate the two edge ``types''. For the \rsmtabbrev{} relaxation, both messages were passed, weighted by $(X_{t})_{ij}$ and $1-(X_{t})_{ij}$, respectively. Because of the parameterization of our model, during evaluation, it is ambiguous whether the sampled hard graph is in the correct representation (adjacency matrix where 1 is the existence of an edge, and 0 is the non-existence of an edge). Therefore, when measuring precision and recall for structure discovery, we selected whichever graph (the sampled graph versus the graph with adjacency matrix of one minus that of the sampled graph) that yielded the highest precision, and reported precision and recall measurements for that graph.

\paragraph{Objective}
Our ELBO objective consisted of the reconstruction error and KL divergence. The reconstruction error was the Gaussian log likelihood of the predicted node positions generated from the decoder given ground truth node positions. As mentioned in \ref{supp:subsec:kl}, we computed the KL divergence with respect to $U$ instead of the sampled graph for all methods, because computing the probability of a \emph{Spanning Tree}, or \emph{Top $k$} sample is not computationally efficient. We chose our prior to be $\Gumbel(0)$. The KL divergence between a Gumbel distribution with location $\theta$ and a Gumbel distribution with location 0, is $\theta + \exp(-\theta) - 1$.

\subsubsection{Training}
All graph layout experiments were run with batch size 128 for 50000 steps. We evaluated the model on the validation set every 500 training steps, and saved the model that achieved the best average validation ELBO. We used the Adam optimizer with a constant learning rate, and $\beta_1=0.9, \beta_2=0.999, \epsilon=10^{-8}$. We tuned hypermarameters using random uniform search over a hypercube-shaped search space with 20 trials. We tuned the constant learning rate, and temperature $t$ for all methods. For \emph{E.F. Ent. Top $k$}, we additionally tuned $\epsilon$, which is used when computing the gradients for the backward-pass using finite-differences. The ranges for hyperparameter values were chosen such that optimal hyperparameter values (corresponding to the best validation ELBO) were not close to the boundaries of the search space.
 \subsection{Unsupervised Parsing on ListOps}
\subsubsection{Data}
We considered a simplified variant of the ListOps dataset \citep{nangia2018listops}. Specifically, we used the same data generation process as \citep{nangia2018listops} but excluded the \texttt{summod} operator and used rejection sampling to ensure that the lengths of all sequences in the dataset ranged only from 10 to 50 and that our dataset contained the same number of sequences of depths $d \in \{1, 2, 3, 4, 5\}$. Depth was measured with respect to the ground truth parse tree. For each sequence, the ground truth parse tree was defined by directed edges from all operators to their respective operands. We generated 100,000 samples for the training set (20,000 for each depth), and 10,000 for the validation and test set (2,000 for each depth).

\subsubsection{Model}
We used an embedding dimension of 60, and all neural networks had 60 hidden units. 

\paragraph{LSTM}
We used a single-layered LSTM going from left to right on the input embedding matrix. The LSTM had hidden size 60 and includes dropout with probability 0.1. The output of the LSTM was flattened and fed into a single linear layer to bring the dimension to 60. The output of the linear layer was fed into an MLP with one hidden layer and ReLU activations.

\paragraph{GNN on latent (di)graph}
Our models had two main parts: an LSTM encoder that produced a graph adjacency matrix sample ($X$ or $X_t$), and a GNN that passed messages over the sampled graph. 

The LSTM encoder consisted of two LSTMs-- one representing the ``head’’ tokens, and the other for ``modifier’’ tokens. Both LSTMs were single-layered, left-to-right, with hidden size 60, and include dropout with probability 0.1. Each LSTM outputted a single real valued vector for each token $i$ of $n$ tokens with dimension 60. To obtain $\theta \in \R^{n \times n}$, we defined $\theta_{ij} = v_i^{\text{head}T}v_j^{\text{mod}}$, where $v_i^{\text{head}}$ is the vector outputted by the head LSTM for word $i$ and $v_i^{\text{mod}}$ is the vector outputted by the modifier LSTM for word $j$. As in the graph layout experiments, we symmetrized $\theta$ for the SSTs that require undirected edges (\emph{Indep. Undirected Edges}, and \emph{Spanning Tree}). For exponential $U \sim \exponential(\theta)$, $\theta$ was parameterized as the softplus function of the $\R^{n \times n}$ matrix output of the encoder. We used the Torch-struct library \citep{rush2020torch} to obtain soft samples for \emph{arborescence}.

$X \in \R^{n \times n}$ in this case was a directed adjacency matrix over the graph $G = (V, E)$ where $V$ were the tokens. $X_{ij} = 1$ is interpreted as there being an edge from $i \to j$ and $0$ for no edge. For the \smtabbrev{s} over undirected edges (\emph{Indep. Undirected Edges} and \emph{Spanning Tree}) $X$ was the symmetric, directed adjacency matrix with edges in both directions for each undirected edge. For \emph{Arborescence}, we assumed the first token is the root node of the arborescence. 

Given $X$, the GNN ran 5 message passing steps over the adjacency matrix, with the initial node embeddings being the input embedding. The GNN architecture was identical to the GNN decoder in the graph layout experiments, except we did not pass messages on edges with $X_{ij} = 0$ and we did not include the last MLP after every messaging step. For the SST, we simply weighted each message from $i \to j$ by $(X_t)_{ij}$. We used dropout with probability 0.1 in the MLPs. We used a recurrent connection after every message passing step. The LSTM encoder and the GNN each had their own embedding lookup table for the input. We fed the node embedding of the first token to an MLP with one hidden layer and ReLU activations. 

\subsubsection{Training}
All ListOps experiments were run with batch size 100 for 50 epochs. We evaluated the model on the validation set every 800 training steps, and saved the model that achieved the best average validation task accuracy. We used the AdamW optimizer with a constant learning rate, and $\beta_1=0.9, \beta_2=0.999, \epsilon=10^{-8}$. We tuned hypermarameters using random uniform search over a hypercube-shaped search space with 20 trials. We tuned the constant learning rate, temperature $t$, and weight decay for all methods. The ranges for hyperparameter values were chosen such that optimal hyperparameter values (corresponding to the best validation accuracy) were not close to the boundaries of the search space. \subsection{Learning To Explain (L2X) Aspect Ratings}
\label{supp:exp:l2x}
\paragraph{Data.}
We used the BeerAdvocate dataset \citep{mcauley2012learning}, which contains reviews comprised of free-text feedback and ratings for multiple aspects, including appearance, aroma, palate, and taste. For each aspect, we used the same de-correlated subsets of the original dataset as \citep{lei2016rationalizing}. The training set for the aspect appearance contained $80$k reviews and for all other aspects $70$k reviews. Unfortunately, \citep{lei2016rationalizing} do not provide separate validation and test sets. Therefore for each aspect, we split their heldout set into two evenly sized validation and test sets containing $5$k reviews each. We used pre-trained word embeddings of dimension $200$ from \citep{lei2016rationalizing} to initialize all models. Each review was padded/ cut to 350 words. For all aspects, subset precision was measured on the same subset of 993 annotated reviews from \citep{mcauley2012learning}. The aspect ratings were normalized to the unit interval $[0,1]$ and MSE is reported on the normalized scale.

\paragraph{Model.}
Our model used convolutional neural networks to parameterize both the subset distribution and to make a prediction from the masked embeddings. For parameterizing the masks, we considered a simple and (a more) complex architecture. The simple architecture consisted of a Dropout layer (with $p=0.1$) and a convolutional layer (with one filter and a kernel size of one) to parameterize $\theta_i \in \R$ for each word $i$, producing a the vector $\theta \in \R^n$. For \emph{Corr. Top $k$}, $\theta \in \R^{2n-1}$. The first $n$ dimensions correspond to each word $i$ and are parameterized as above. For dimensions $i \in \{n+1, \ldots, 2n-1\}$, $\theta_i$ represents a coupling between words $i$ and $i+1$, so we denote this $\theta_{i,i+1}$. It was parameterized as the sum of three terms, $\theta_{i,i+1} = \phi_i + \phi_i' + \phi_{i, i+1}$: $\phi_i \in \R$ computed using a seperate convolutional layer of the same kind as described above, $\phi_i' \in \R$ computed using yet another convolutional layer of the same kind as described above, and $\phi_{i,i+1} \in \R$ obtained from a third convolutional layer with one filter and a kernel size of two. In total, we used four separate convolutional layers to parameterize the simple encoder. For the complex architecture, we used two additional convolutional layers, each with $100$ filters, kernels of size three, ReLU activations to compute the initial word embeddings. This was padded to maintain the length of a review. 

$X$ was a $k$-hot binary vector in $n$-dimensions with each dimension corresponding to a word. For \emph{Corr. Top $K$}, we ignored dimensions $i \in \{n+1, \ldots, 2n-1\}$, which correspond to the pairwise indicators. Predictions were made from the masked embeddings, using $X_i$ to mask the embedding of word $i$. Our model applied a soft (at training) or hard (at evaluation) subset mask to the word embeddings of a given review. Our model then used two convolutional layers over these masked embeddings, each with $100$ filters, kernels of size three and ReLU activations. The resulting output was max-pooled over all feature vectors. Our model then made predictions using a Dropout layer (with $p=0.1$), a fully connected layer (with output dimension 100, ReLU activation) and a fully connected layer (with output dimension one, sigmoid activation) to predict the rating of a given aspect. 

\paragraph{Training.}
We trained all models for ten epochs at minibatches of size 100. We used the Adam optimizer \citep{kingma2014adam} and a linear learning rate schedule. Hyperparameters included the initial learning rate, its final decay factor, the number of epochs over which to decay the learning rate, weight decay and the temperature of the relaxed gradient estimator. Hyperparameters were optimized for each model using random search over 25 independent runs. The learning rate and its associated hyperparameters were sampled from $\{1, 3, 5, 10, 30, 50, 100\} \times 10^{-4}$, $\{1, 10, 100, 1000\} \times 10^{-4}$ and $\{5, 6, \ldots, 10\}$ respectively. Weight decay was sampled from $\{0, 1, 10, 100\} \times 10^{-6}$ and the temperature was sampled from $[0.1, 2]$. For a given run, models were evaluated on the validation set at the end of each epoch and the best validated model was was retained. For \emph{E.F. Ent. Top $k$} and \emph{Corr. Top $k$}, we  trained these methods with $\epsilon \in \{1, 10, 100, 1000\} \times 10^{-3}$ and selected the best $\epsilon$ on the validation set. We believe that it may be possible to improve on the results we report with an efficient exact implementation of the backward pass for these two methods. We found the overhead created by automatic differentiation software to differentiate through the unrolled dynamic program was prohibitively large in this experiment. 

   \end{document}